\documentclass{article}

\PassOptionsToPackage{round}{natbib}
\usepackage[preprint]{neurips_2026}
\renewcommand{\cite}{\citep}

\usepackage[utf8]{inputenc}
\usepackage[T1]{fontenc}
\usepackage{hyperref}
\hypersetup{
  pdftitle={WarmPrior: Straightening Flow-Matching Policies with Temporal Priors},
  pdfauthor={Sinjae Kang and Chanyoung Kim and Kaixin Wang and Li Zhao and Kimin Lee}
}
\usepackage{url}
\usepackage{booktabs}
\usepackage{amsfonts}
\usepackage{nicefrac}
\usepackage{microtype}
\usepackage{xcolor}

\usepackage{graphicx}
\usepackage{subcaption}
\usepackage{wrapfig}
\usepackage{makecell}
\usepackage{amsmath}
\usepackage{amssymb}
\usepackage{mathtools}
\usepackage{amsthm}
\usepackage{algorithm}
\usepackage{algorithmic}
\usepackage[capitalize,noabbrev]{cleveref}

\theoremstyle{plain}
\newtheorem{theorem}{Theorem}[section]
\newtheorem{proposition}[theorem]{Proposition}

\theoremstyle{definition}

\theoremstyle{remark}

\definecolor{goodgreen}{RGB}{20,120,55}
\newcommand{\dgood}[1]{(\textcolor{goodgreen}{+#1})}
\newcommand{\dpos}[1]{(+#1)}
\newcommand{\dneg}[1]{($-$#1)}

\title{WarmPrior: Straightening Flow-Matching Policies with Temporal Priors}

\author{%
  Sinjae Kang$^{1}$ \quad Chanyoung Kim$^{1}$ \quad Kaixin Wang$^{2}$ \quad Li Zhao$^{2}$ \quad Kimin Lee$^{1}$ \\
  $^{1}$KAIST \quad $^{2}$Microsoft Research
}

\begin{document}

\maketitle

\begin{abstract}
  Generative policies based on diffusion and flow matching have become a dominant paradigm for visuomotor robotic control. We show that replacing the standard Gaussian source distribution with \emph{WarmPrior}, a simple temporally grounded prior constructed from readily available recent action history, consistently improves success rates on robotic manipulation tasks. We trace this gain to markedly \emph{straighter} probability paths, echoing the effect of optimal-transport couplings in Rectified Flow. Beyond standard behavior cloning, \emph{WarmPrior} also reshapes the exploration distribution in prior-space reinforcement learning, improving both sample efficiency and final performance. Collectively, these results identify the \emph{source distribution} as an important and underexplored design axis in generative robot control.
  Project page: \url{https://sinnnj.github.io/WarmPrior/}.
\end{abstract}

\section{Introduction}
\label{sec:intro}

Learning generative policies for robotic manipulation, such as diffusion policies and flow-matching policies, has become a dominant paradigm for multi-modal behavior cloning~\cite{chi2023diffusionpolicy, nvidia2025gr00tn1, black2024pi0}.
In these frameworks, a neural field transports samples from a fixed source distribution to the data manifold of action chunks.
Almost universally, this source distribution is the isotropic Gaussian $\mathcal{N}(0,I)$, a convention inherited from diffusion's denoising-from-noise interpretation~\cite{ho2020ddpm,song2021ddim} and preserved by flow matching~\cite{braun2024rfmp,hu2024adaflow} and its few-step policy descendants~\cite{prasad2024consistencypolicy,lu2024manicm,wang2024onedp}, while progress was pushed through the network, the interpolant, and the integrator.
The \emph{prior space} has been quietly left untouched.
Yet as denoising schedules shorten, the starting point absorbs more of the burden that integration steps once carried.
A stateless, uninformative source remains blind to the continuous, temporally correlated nature of robotic motion, forcing the policy to rebuild every action chunk from scratch.

\begin{figure}[t]
  \centering
  \includegraphics[width=\textwidth]{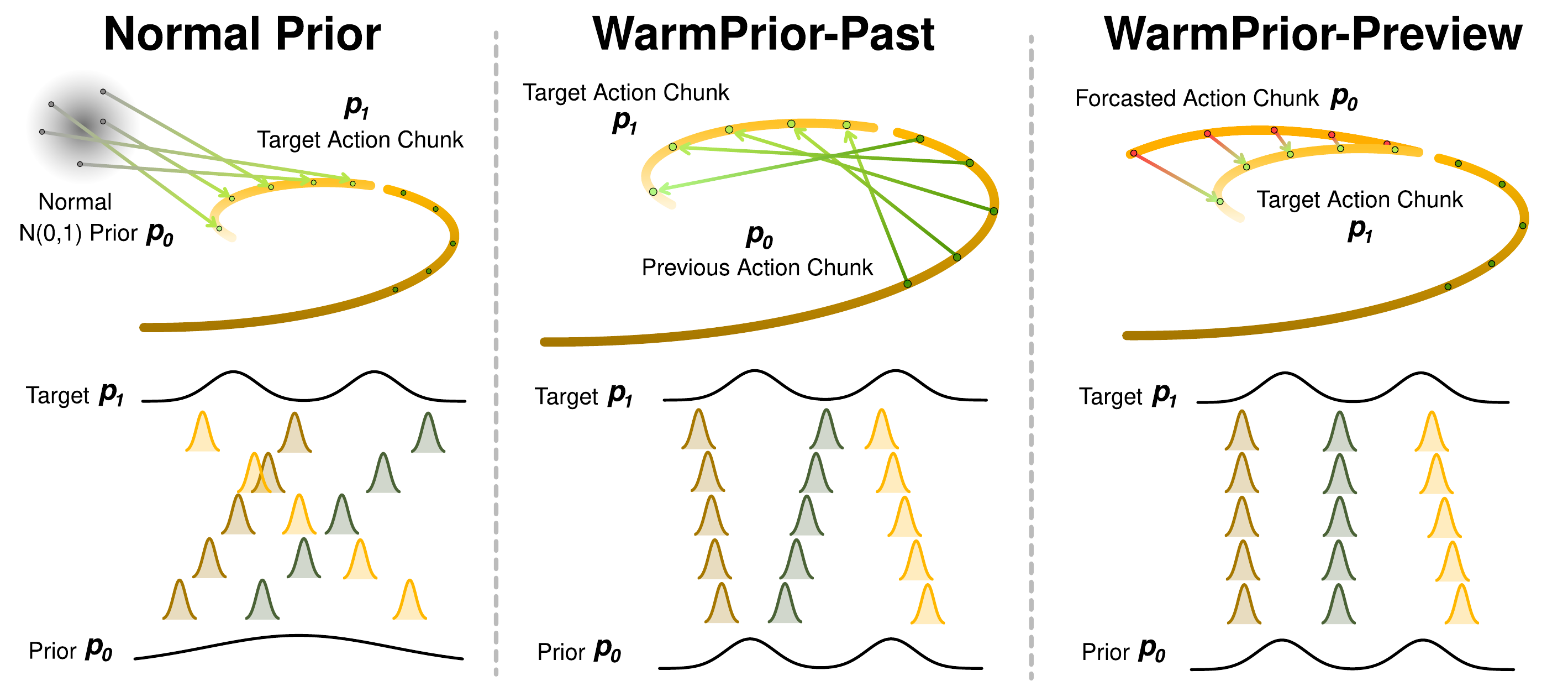}
  \caption{\textbf{WarmPrior.} Standard flow-matching policies transport samples from a context-free $\mathcal{N}(0,I)$ to the action manifold (left). WarmPrior initializes the transport from a temporally grounded Gaussian centered on the recent past-action chunk (Past) or on the model's own previous forecast of the current chunk (Preview) (middle, right). The resulting probability path is shorter, straighter, and temporally correlated across consecutive chunks.}
  \label{fig:main-idea}
\end{figure}

We introduce \textbf{WarmPrior}, which replaces this stateless source with a \emph{temporally grounded} prior whose mean is anchored on recent action history (\cref{fig:main-idea}).
We instantiate it in two minimal variants: \emph{WP-Past} anchors the prior on the previously executed action chunk, while \emph{WP-Preview} trains the policy to predict twice the chunk length at each inference step and reuses the model's own previous forecast of the current chunk as the prior mean.
Both add a residual Gaussian perturbation $\sigma\, \varepsilon$ so that the source remains a proper distribution, and both leave the network, the interpolant, and the integrator untouched (\cref{sec:method}).

This deliberately minimal intervention yields gains that compound along three independent axes.
\emph{Geometrically}, starting close to the target manifold shortens the transport and straightens the learned probability paths, acting as an implicit optimal-transport coupling that suppresses the irreducible endpoint ambiguity the network would otherwise average over (\cref{sec:straight}).
\emph{Temporally}, the residual scale $\sigma$ becomes a continuous knob between within-rollout commitment and multimodal expressiveness, supplying an implicit form of the consistency that action chunking enforces explicitly, and largely recovering baseline performance even when chunking is disabled (\cref{sec:ac1}).
\emph{Downstream}, WarmPrior recenters and shrinks the search space of prior-space reinforcement learning around a temporally grounded mean, so a tighter residual action on top of a pretrained policy outperforms vanilla DSRL~\cite{dsrl2025} in both sample efficiency and asymptotic performance (\cref{sec:rl}).

Empirically, on Robomimic, MimicGen, and a real Franka Research 3 setup, WarmPrior consistently improves success rate over the $\mathcal{N}(0,I)$ baseline with both the Diffusion Policy backbone~\cite{chi2023diffusionpolicy} and the VLA model GR00T N1.5~\cite{nvidia2025gr00tn15}; the improvement is largest at the lowest inference budgets and on the harder tasks, where the curvature of the flow matters most (\cref{sec:main-results}).
Taken together, these results promote the \textbf{\emph{source distribution}} from an inherited default to a first-class, and previously underexplored, design axis in generative robotic control.

\section{Background and Related Work}
\label{sec:background}

\paragraph{Flow-matching policies.}
Flow matching~\cite{lipman2023flow,albergo2023stochastic} trains a velocity network $v_\theta(t, a_t, o)$ along the linear interpolant $a_t = (1-t)\, a_0 + t\, a_1$ between a source $a_0 \sim p_0$ and data $a_1 \sim p_{\mathrm{data}}(\cdot \mid o)$, and samples by integrating $\dot a_t = v_\theta(t, a_t, o)$ from $a_0$. This paradigm underlies diffusion and flow-matching policies for behavior cloning~\cite{chi2023diffusionpolicy,janner2022planning,braun2024rfmp,hu2024adaflow,chisari2024pointflowmatch} and vision-language-action models~\cite{nvidia2025gr00tn1,black2024pi0,pi05}. Nearly all of them use $p_0 = \mathcal{N}(0, I)$; our work revisits that choice.

\paragraph{Optimal-transport couplings and straightened flows.}
Under the independent coupling $(a_0, a_1) \sim p_0 \otimes p_{\mathrm{data}}$, crossing trajectories force the velocity network to average over ambiguous endpoints, producing curved paths. Rectified Flow~\cite{liu2023rectified}, Multisample Flow Matching~\cite{pooladian2023multisample}, OT-CFM~\cite{tong2024conditional}, and Schr\"odinger-bridge variants~\cite{shi2023dsbm,tong2024sf2m} all reshape this \emph{coupling} to approximate dynamic OT. WarmPrior is complementary: it leaves the coupling independent and instead reshapes the \emph{source distribution} so the flow begins already close to data, straightening paths (\cref{sec:straight}) without any OT solver or retraining stage.

\paragraph{Informed priors for generative robot policies.}
Modifying the source of a generative policy is a small but emerging direction.
BRIDGER~\cite{chen2024bridger} replaces the Gaussian source with a data-aware, non-Gaussian source policy and bridges it to the expert distribution via stochastic interpolants.
In concurrent work, STEP~\cite{li2026step} trains an auxiliary action predictor whose output, perturbed by scheduled Gaussian noise, is injected at an \emph{intermediate} denoising step rather than at $t=0$, so the warm start lives inside the diffusion trajectory.
A2A~\cite{jia2026a2a} also anchors the prior on past actions, but encodes them deterministically into a latent source and composes deterministic ODE and decoder on top, making it effectively a history-conditioned \emph{deterministic flow transport model} rather than a stochastic generative sampler.
In contrast, WarmPrior preserves the stochastic flow-matching formulation end-to-end and focuses squarely on how to construct the \emph{prior space} $p_0$ itself (\cref{sec:method}).

\newcounter{algline}
\begin{algorithm}[!t]
  \caption{Training and inference of FM policy with WarmPrior.}
  \label{alg:warmprior}
  \vspace{0.2em}
  \small\linespread{1.0}\selectfont
  \begin{algorithmic}[1]
  \STATE {\bfseries Input:} dataset $\mathcal{D}$, interpolant $(\alpha, \beta)$, noise scale $\sigma$, chunk length $H$ (prediction length is $H$ for Past, $2H$ for Preview)
  \STATE {\bfseries Parameters:} velocity net $v_\theta$ (learnable)
  \makeatletter\setcounter{algline}{\value{ALC@line}}\makeatother
  \end{algorithmic}
  \vspace{0.0em}

  \noindent
  \begin{tabular}{@{}p{0.475\textwidth}@{\hspace{0.5em}}!{\color{black!25}\vrule}@{\hspace{0.5em}}p{0.475\textwidth}@{}}
  \textbf{\textsc{Training}} & \textbf{\textsc{Inference}} \\
  \begin{minipage}[t]{\linewidth}
    \vspace{-0.5em}
    \begin{algorithmic}[1]
    \makeatletter\setcounter{ALC@line}{\value{algline}}\makeatother
    \FOR{each iteration}
      \STATE Sample $(o, a_1, i) \sim \mathcal{D}$
      \STATE Draw $\varepsilon \sim \mathcal{N}(0, I)$ matching $a_1$; set $a_0 \gets \varepsilon$ \label{line:init}
      \IF{\textsc{Past}} \label{line:past}
        \STATE $a_0 \gets a^{\mathrm{data}}[i\!-\!H{:}i] + \sigma\, \varepsilon$ (when $i\!\geq\! H$)
      \ELSIF{\textsc{Preview}} \label{line:preview}
        \STATE $a_0[0{:}H] \gets a_1[0{:}H] + \sigma\, \varepsilon[0{:}H]$
      \ENDIF
      \STATE $t \sim \mathcal{U}(0,1)$
      \STATE $a_t \gets \alpha(t)\, a_0 + \beta(t)\, a_1$
      \STATE $\mathcal{L} \gets \| v_\theta(t, a_t, o) - (\dot\alpha\, a_0 + \dot\beta\, a_1) \|_2^2$
      \STATE Gradient step on $\theta$
    \ENDFOR
    \makeatletter\setcounter{algline}{\value{ALC@line}}\makeatother
    \end{algorithmic}
  \end{minipage}
  &
  \begin{minipage}[t]{\linewidth}
    \vspace{-0.5em}
    \begin{algorithmic}[1]
    \makeatletter\setcounter{ALC@line}{\value{algline}}\makeatother
    \STATE $\hat{a}^{\mathrm{prev}} \gets \varnothing$; reset env, observe $o$
    \WHILE{episode not done}
      \STATE Draw $\varepsilon \sim \mathcal{N}(0, I)$; set $a_0 \gets \varepsilon$
      \IF{\textsc{Past} \textbf{and} $\hat{a}^{\mathrm{prev}} \neq \varnothing$}
        \STATE $a_0 \gets \hat{a}^{\mathrm{prev}} + \sigma\, \varepsilon$
      \ELSIF{\textsc{Preview} \textbf{and} $\hat{a}^{\mathrm{prev}} \neq \varnothing$}
        \STATE $a_0[0{:}H] \gets \hat{a}^{\mathrm{prev}}[H{:}2H] + \sigma\, \varepsilon[0{:}H]$
      \ENDIF
      \STATE $\hat{a} \gets \textsc{FMSample}(v_\theta, a_0, o)$
      \STATE Execute $\hat{a}[0{:}H]$; observe next $o$
      \STATE $\hat{a}^{\mathrm{prev}} \gets \hat{a}$
    \ENDWHILE
    \end{algorithmic}
  \end{minipage}
  \end{tabular}
  \vspace{0.2em}
\end{algorithm}

\section{WarmPrior}
\label{sec:method}

WarmPrior modifies only the source distribution of a flow-matching policy: it reshapes $p_0$ while leaving the network, interpolant, and training objective untouched.
We instantiate it as two minimal variants, \emph{WarmPrior-Past} (WP-Past) and \emph{WarmPrior-Preview} (WP-Preview), which differ only in how the prior mean is anchored to the agent's own action history.
Below we formalize the common template (\cref{alg:warmprior}) and then specify each variant in turn.

\paragraph{Formulation.}
Let $a_0$ denote the sample drawn from the prior that the flow-matching ODE transports into the predicted action chunk, with shape $H \times d_a$ for Past and $2H \times d_a$ for Preview.
For a warm index set $\mathcal{W}$ over the prediction positions, with cold complement $\mathcal{C}$, and a mean $\mu$ defined on $\mathcal{W}$, WarmPrior samples
\begin{equation}
  a_0[\tau] \;=\;
  \begin{cases}
    \mu_\tau + \sigma\, \varepsilon_\tau, & \tau \in \mathcal{W}, \\
    \varepsilon_\tau, & \tau \in \mathcal{C},
  \end{cases}
  \qquad \varepsilon \sim \mathcal{N}(0, I).
  \label{eq:warm-prior}
\end{equation}
The cold region keeps the vanilla flow-matching prior intact, so positions without a reliable anchor behave exactly as in the standard flow-matching baseline.
The scalar $\sigma > 0$ controls the residual noise on warm positions so that the warm region remains a proper distribution rather than a deterministic point mass; we fix $\sigma$ per variant below and revisit it as a multimodality knob in \cref{sec:ac1}.
Under this formulation, WarmPrior is fully specified by the pair $(\mathcal{W}, \mu)$ together with the prediction length. Our primary goal is to start the generative flow from a \textbf{\emph{plausible target action}} rather than pure noise, and we propose two variants that differ in how the prior mean $\mu$ is anchored.

\paragraph{WarmPrior-Past.}
\label{sec:method:past}
The simplest plausible target is the previous action chunk: WP-Past predicts a single chunk of $H$ actions and anchors $\mu$ on the previous action chunk.

At training, for each sample with in-buffer index $i$, we retrieve the $H$ preceding actions $a^{\mathrm{data}}_{i-H:i}$ from the replay buffer (normalized to the training action space), verify via a binary search on episode boundaries that the window lies within a single episode, and set:
\begin{equation}
  \mu^{\mathrm{Past}}_\tau \;=\; a^{\mathrm{data}}[i-H+\tau], \qquad \text{for } \tau \in \{0, \dots, H-1\}.
  \label{eq:past-train}
\end{equation}
When the window would cross an episode boundary (e.g., at the start of a demonstration), the sample falls back to $\mathcal{W} = \emptyset$.

At inference, we directly use the previously executed action chunk, setting $\mu^{\mathrm{Past}}_\tau = \hat{a}^{\mathrm{prev}}_\tau$ with $\mathcal{W} = \{0, \dots, H-1\}$, and fall back to $\mathcal{W} = \emptyset$ at the first chunk. We use $\sigma = 0.5$ for this variant.

\paragraph{WarmPrior-Preview.}
\label{sec:method:preview}
WP-Preview trains the policy to look one chunk further than it needs to: instead of predicting a single chunk of $H$ actions, it predicts $2H$ actions at each inference step and executes only the first $H$.
The second $H$ steps serve as a \emph{preview} of the next chunk, acting as the model's own forecast of future actions.
When the next decision step arrives, this preview aligns exactly with the first $H$ positions of the new prediction, providing a natural and highly accurate prior mean for the next generation process.
Crucially, across both training and inference, the $2H$-step generation is strictly partitioned: the first $H$ steps (the actions to be executed) are generated starting from the WarmPrior ($\mathcal{W} = \{0, \dots, H-1\}$), while the second $H$ steps (the preview) are generated starting from pure Gaussian noise ($\mathcal{C} = \{H, \dots, 2H-1\}$).

At training, we face a chicken-and-egg problem: the ideal prior mean would be the model's own past preview, which is unavailable before the model is trained.
However, the ground-truth target itself is precisely the limit that a perfectly calibrated preview would converge to: at convergence, the model's prior forecast of the current chunk should coincide with the chunk itself.
Thus, we can simply use the ground-truth target itself as a proxy for a perfectly calibrated preview:
\begin{equation}
  \mu^{\mathrm{Preview}}_\tau \;=\; a_1[\tau], \qquad \text{for } \tau \in \{0, \dots, H-1\},
  \label{eq:preview-train}
\end{equation}
where $a_1 \in \mathbb{R}^{2H \times d_a}$ spans the full $2H$-step horizon. We use $\sigma = 1.0$ for this variant.

At inference, let $\hat{a}^{\mathrm{prev}} \in \mathbb{R}^{2H \times d_a}$ be the previous $2H$-step prediction.
WP-Preview sets
\begin{equation}
  \mu^{\mathrm{Preview}}_\tau \;=\; \hat{a}^{\mathrm{prev}}[H+\tau], \qquad \text{for } \tau \in \{0, \dots, H-1\},
  \label{eq:preview-infer}
\end{equation}
so that the warm first half of the new prior carries the previous forecast of the current chunk, while the cold second half covers the new horizon that no previous preview has seen.
At the first chunk of an episode, where no previous prediction exists, we fall back to $\mathcal{W} = \emptyset$.

\begin{table}[!t]
  \caption{\textbf{Simulation success rate (\%)} on Robomimic and MimicGen (image) at $H=8$ across three inference budgets. Parentheses show the absolute gain over the $\mathcal{N}(0,I)$ baseline; \textcolor{goodgreen}{green} marks gains exceeding $\sigma_{\mathrm{base}} + \sigma_{\mathrm{method}}$ (non-overlapping $1\sigma$ seed intervals). Best per (task, NFE) in \textbf{bold}.}
  \label{tab:main-sr}
  \centering
  \begin{footnotesize}
  \setlength{\tabcolsep}{3.5pt}
  \begin{tabular}{l@{\hskip 6pt}ccc@{\hskip 8pt}ccc@{\hskip 8pt}ccc}
    \toprule
      & \multicolumn{3}{c}{\textbf{NFE $=9$}} & \multicolumn{3}{c}{\textbf{NFE $=3$}} & \multicolumn{3}{c}{\textbf{NFE $=1$}} \\
    \cmidrule(lr){2-4}\cmidrule(lr){5-7}\cmidrule(lr){8-10}
    Task & Base & WP-Past & WP-Preview & Base & WP-Past & WP-Preview & Base & WP-Past & WP-Preview \\
    \midrule
    \multicolumn{10}{l}{\textit{Robomimic — state observation}} \\
    Square-PH        & 86.7 & \textbf{88.1} \dpos{1.4}   & \textbf{88.1} \dgood{1.4}  & 86.2 & \textbf{88.0} \dgood{1.8}  & 87.9 \dgood{1.7}           & 83.6 & 86.6 \dgood{3.0}            & \textbf{87.3} \dgood{3.7}  \\
    Square-MH        & 65.9 & 69.2 \dgood{3.3}           & \textbf{72.7} \dgood{6.8}  & 65.4 & \textbf{73.2} \dgood{7.8}  & 72.9 \dgood{7.5}           & 65.9 & 70.1 \dpos{4.2}             & \textbf{77.8} \dgood{11.9} \\
    Transport-PH     & 34.1 & 36.2 \dpos{2.1}            & \textbf{43.3} \dgood{9.2}  & 39.0 & 44.0 \dgood{5.0}           & \textbf{49.1} \dgood{10.1} & 36.8 & 39.8 \dpos{3.0}             & \textbf{47.6} \dgood{10.8} \\
    Transport-MH     & 16.3 & 20.7 \dgood{4.4}           & \textbf{24.3} \dgood{8.0}  & 21.3 & \textbf{30.7} \dgood{9.4}  & 30.4 \dgood{9.1}           & 23.3 & 30.2 \dgood{6.9}            & \textbf{34.5} \dgood{11.2} \\
    Tool-Hang-PH     & 79.4 & 80.6 \dpos{1.2}            & \textbf{82.8} \dgood{3.4}  & 72.3 & 75.1 \dgood{2.8}           & \textbf{75.8} \dpos{3.5}   & 77.7 & 78.2 \dpos{0.5}             & \textbf{81.9} \dgood{4.2}  \\
    \midrule
    \multicolumn{10}{l}{\textit{Robomimic — image observation}} \\
    Square-PH        & 86.9 & 88.2 \dpos{1.3}            & \textbf{88.7} \dgood{1.7}  & 87.7 & 89.2 \dgood{1.4}           & \textbf{89.6} \dgood{1.9}  & 88.7 & \textbf{89.3} \dpos{0.6}    & 89.1 \dpos{0.4}            \\
    Square-MH        & 76.1 & \textbf{78.0} \dpos{1.9}   & 77.8 \dpos{1.7}            & 73.8 & \textbf{77.9} \dgood{4.1}  & 77.1 \dgood{3.2}           & 72.4 & \textbf{77.6} \dgood{5.2}   & 75.1 \dgood{2.7}           \\
    Transport-PH     & 92.8 & \textbf{94.5} \dgood{1.7}  & 94.3 \dgood{1.6}           & 92.1 & 93.9 \dpos{1.9}            & \textbf{94.9} \dgood{2.9}  & 91.3 & 93.4 \dgood{2.2}            & \textbf{93.7} \dgood{2.4}  \\
    Transport-MH     & 74.8 & 79.7 \dgood{4.9}           & \textbf{79.8} \dgood{4.9}  & 73.8 & 80.0 \dgood{6.2}           & \textbf{80.7} \dgood{6.9}  & 74.3 & 78.6 \dpos{4.3}             & \textbf{79.7} \dpos{5.4}   \\
    Tool-Hang-PH     & 43.7 & 45.8 \dpos{2.1}            & \textbf{56.3} \dgood{12.6} & 36.9 & 38.4 \dpos{1.4}            & \textbf{50.7} \dgood{13.8} & 41.3 & 38.9 \dneg{2.4}             & \textbf{54.0} \dgood{12.7} \\
    \midrule
    \multicolumn{10}{l}{\textit{MimicGen — image observation}} \\
    Stack            & 21.4 & 22.8 \dpos{1.4}            & \textbf{31.6} \dgood{10.2} & 21.3 & 23.7 \dpos{2.4}            & \textbf{30.7} \dgood{9.4}  & 21.3 & 22.4 \dpos{1.1}             & \textbf{28.7} \dgood{7.4}  \\
    Coffee           & 26.8 & 29.6 \dpos{2.8}            & \textbf{34.7} \dgood{7.9}  & 23.3 & 24.1 \dpos{0.8}            & \textbf{33.4} \dgood{10.1} & 16.2 & 20.4 \dpos{4.2}             & \textbf{29.4} \dgood{13.2} \\
    Threading        & 13.8 & 15.5 \dpos{1.7}            & \textbf{20.9} \dgood{7.1}  & 16.3 & 16.6 \dpos{0.3}            & \textbf{22.0} \dgood{5.7}  & 12.5 & 15.6 \dpos{3.1}             & \textbf{18.0} \dgood{5.5}  \\
    \bottomrule
  \end{tabular}
  \end{footnotesize}
\end{table}

\emph{Optimal transport.} Among the WarmPrior variants we consider, Preview is the choice that pushes the prior mean as close as possible to the target: when the preview is accurate, the flow starts directly on the model's own forecast of the current chunk and only has to correct its residual error (\cref{sec:straight}).

\emph{Residual policy interpretation.} Because the warm portion of the prior already \emph{is} a prediction of the current chunk, the flow only needs to learn the correction $a_1 - \mu^{\mathrm{Preview}}$ on top of a committed forecast. In this sense, WP-Preview turns the generative policy into a residual policy that refines its own previous plan.

\section{Main Results}
\label{sec:main-results}

\subsection{Simulation}
\label{sec:main-sim}

\paragraph{Setup.}
We evaluate in simulation on two robotic manipulation benchmarks: Robomimic~\cite{robomimic2021} and MimicGen~\cite{mandlekar2023mimicgen}.
On Robomimic we evaluate under both state- and image-observation regimes on \textsc{Square}, \textsc{Transport}, and \textsc{Tool-Hang} in the \textsc{PH} (proficient-human) split, plus the harder \textsc{MH} (multi-human) splits for \textsc{Square} and \textsc{Transport}, omitting \textsc{Lift} and \textsc{Can} on which the flow-matching policy already saturates near 100\% success rate.
On MimicGen we use the human-demonstration datasets (10 demos per task) for \textsc{Stack}, \textsc{Coffee}, and \textsc{Threading} under image observations.

We evaluate WarmPrior on the Diffusion Policy (ChiTransformer)~\cite{chi2023diffusionpolicy}, a widely adopted policy architecture, trained here with flow matching.
All methods share the linear flow interpolant; only the source distribution changes.
Since behavior-cloning training curves for Diffusion Policy on Robomimic are known to be noticeably noisy across checkpoints~\cite{robomimic2021}, we train these models for a sufficient 200k iterations at a batch size of 1024 (state) or 256 (image). To mitigate this variance, we evaluate at regularly spaced checkpoints and average the performance of the top-3 checkpoints per seed. The success rate is computed over 200 episodes and 3 seeds at three inference budgets (NFE $\in \{9,3,1\}$).
Unless stated otherwise, the action-chunk length is $H = 8$ for both Robomimic and MimicGen.
Full training hyperparameters and additional implementation details are provided in~\cref{app:training}.

\paragraph{Performance improvements.}
\cref{tab:main-sr} reports the full NFE sweep.
The majority of evaluations exhibit non-overlapping one-standard-deviation intervals between the baseline and our method (green deltas), demonstrating that this simple modification to the prior distribution yields a \textit{highly significant} performance boost. Furthermore, bold values highlight the best performance among the evaluated methods. While WP-Past achieves respectable performance gains, WP-Preview demonstrates even greater improvements. Finally, we observe that the magnitude of these improvements is most pronounced at the lowest inference budget, with the \textit{largest} average performance gains occurring at NFE $= 1$. We discuss the underlying reasons for both observations in \cref{sec:straight}.

\subsection{Real-Robot Experiments}
\label{sec:main-real}

To validate our approach in the real world, we deploy our method on a Franka Research 3. As illustrated in \cref{fig:real-tasks}, we construct four tabletop manipulation tasks and collect human teleoperation demonstrations using the DROID platform setup~\cite{khazatsky2024droid}. Each task is trained on a dataset of 30 demonstrations.

For the policy architecture, we employ the GR00T N1.5 VLA model~\cite{nvidia2025gr00tn15}, which also utilizes a flow-matching action head. The models are trained for 30k steps with a batch size of 64; further training details are provided in~\cref{app:training}. During inference, the number of function evaluations (NFE) is fixed to 4. We evaluate the performance using 3 independent training seeds, conducting 50 evaluation trials per seed. As reported in \cref{fig:real-bar}, WarmPrior consistently improves the overall success rate across all four real-world tasks, with the largest gains on the precision-demanding \emph{Cable Insertion} and \emph{Block Stacking}.

\begin{figure}[t]
  \centering
  \begin{minipage}[b]{0.36\linewidth}
    \centering
    \includegraphics[width=\linewidth]{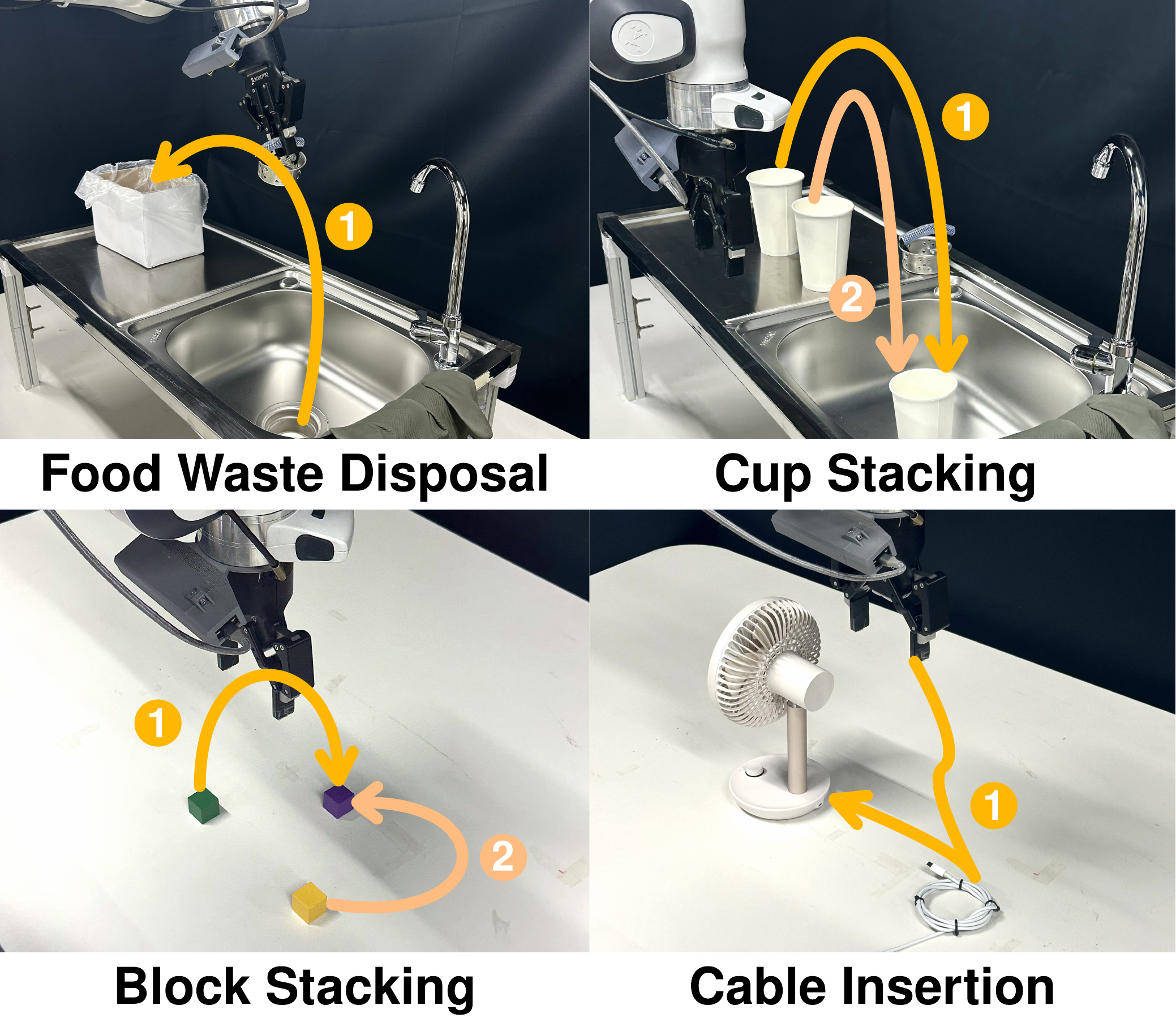}
    \caption{\textbf{Real-robot tasks.} Four tabletop manipulation scenes used in \cref{fig:real-bar}: \emph{Food Waste Disposal}, \emph{Cup Stacking}, \emph{Block Stacking}, and \emph{Cable Insertion}.}
    \label{fig:real-tasks}
  \end{minipage}%
  \hspace{1.5em}%
  \begin{minipage}[b]{0.55\linewidth}
    \centering
    \includegraphics[width=\linewidth]{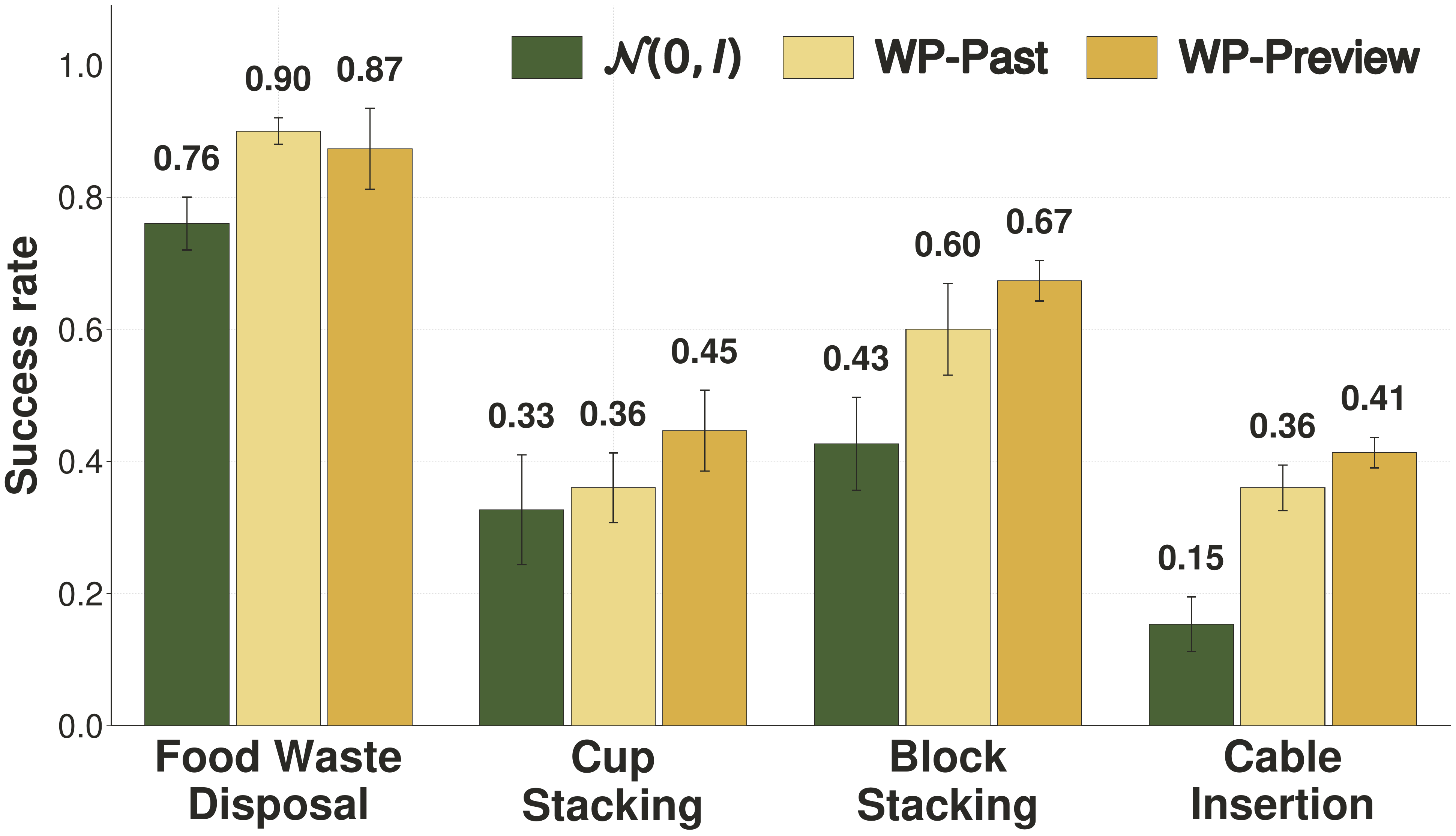}
    \caption{\textbf{Real-robot success rate.} We evaluate WP-Past, WP-Preview, and the $\mathcal{N}(0,I)$ baseline on four tabletop manipulation tasks, reporting the mean and standard deviation over three training seeds (50 trials per seed).}
    \label{fig:real-bar}
  \end{minipage}
\end{figure}

\section{Understanding and Extending WarmPrior}
\label{sec:analysis}

In this section, we investigate \emph{why} replacing the standard $\mathcal{N}(0,I)$ source with WarmPrior translates into the consistent gains of \cref{sec:main-results}, and what further consequences follow.
\cref{sec:straight} gives a geometric account: WarmPrior shortens the transport and straightens the learned probability paths.
\cref{sec:ac1} reveals a second, independent benefit, \emph{temporal consistency}: WarmPrior supplies a $\sigma$-tunable form of the consistency that action chunking provides explicitly, and the effect is most pronounced when explicit chunking is turned off ($H=1$).
\cref{sec:rl} then extends the same prior to reinforcement learning, showing that it also reshapes the exploration space of prior-space RL.
The first two subsections explain the behavior-cloning gains of \cref{tab:main-sr}; the third is a natural extension of WarmPrior to a downstream setting.

\subsection{WarmPrior Improves SR by Straightening Flow Trajectories}
\label{sec:straight}

\begin{wrapfigure}{r}{0.45\linewidth}
  \vspace{-1em}
  \centering
  \includegraphics[width=\linewidth]{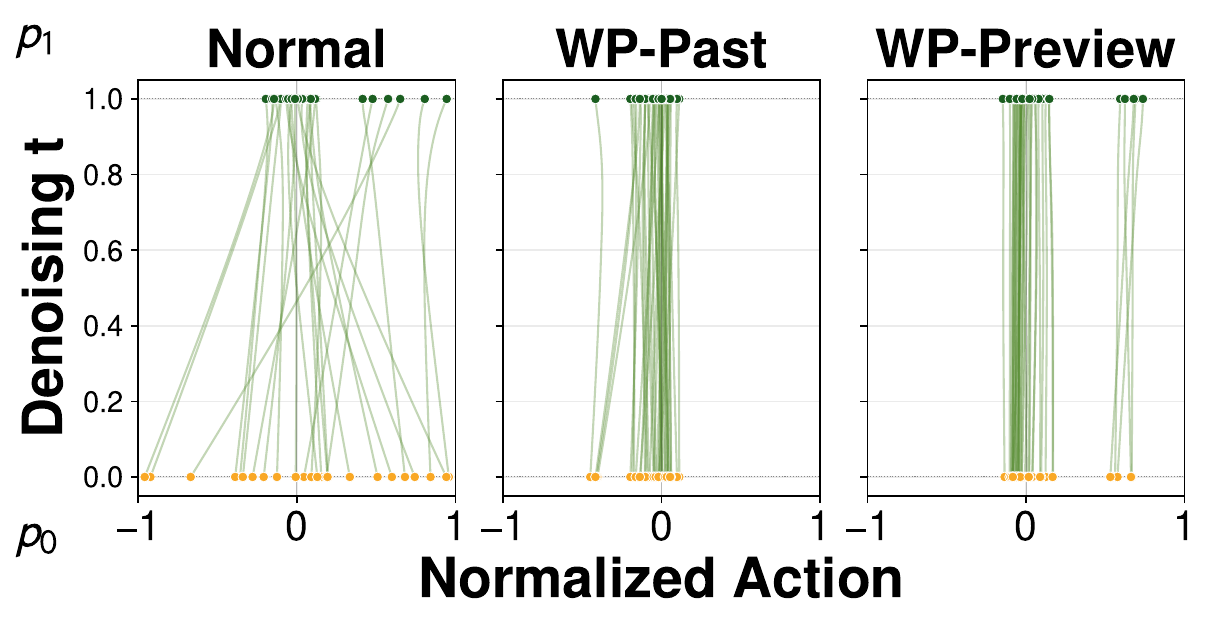}
  \caption{\textbf{Flow trajectories on \textsc{Square-MH}.} Normalized action coordinate vs.\ denoising time $t\!\in\![0,1]$; bottom markers: prior $p_0$, top markers: prediction $p_1$.}
  \label{fig:straight}
\end{wrapfigure}

\paragraph{Empirical observation.}
\cref{fig:straight} shows the integration paths of $a_t$ for a baseline flow policy and for WarmPrior on the same observations from Robomimic \textsc{Square-MH}.
The baseline paths curve noticeably as the network pulls samples from a random origin onto the action manifold; the WarmPrior paths, already starting close to the manifold, are visibly straighter and more parallel.
Intuitively, because fewer flows cross one another, the conditional flow-matching network spends less capacity realigning samples from the random base distribution and can devote more to refining actions, exactly where it matters for downstream success rate.

\begin{wraptable}{r}{0.45\linewidth}
  \vspace{-1em}
  \centering
  \caption{Pathwise curvature $\kappa(o)$ of the learned flow on state-observation tasks (\cref{eq:curvature}; lower is straighter; values normalized so the $\mathcal{N}(0,I)$ baseline reads $1.000$).}
  \label{tab:curvature}
  \vspace{0.3em}
  \footnotesize
  \setlength{\tabcolsep}{3pt}
  \begin{tabular}{lccc}
    \toprule
    Task & $\mathcal{N}(0,I)$ & WP-Past & WP-Preview \\
    \midrule
    Square-PH     & 1.000 & 0.823 & 0.803 \\
    Square-MH     & 1.000 & 0.705 & 0.559 \\
    Transport-PH  & 1.000 & 0.720 & 0.692 \\
    Transport-MH  & 1.000 & 0.695 & 0.637 \\
    Tool-Hang-PH  & 1.000 & 0.806 & 0.807 \\
    \bottomrule
  \end{tabular}
\end{wraptable}

\paragraph{Curvature diagnostic.}
To make the observation quantitative, we measure the pathwise curvature of the learned flow.
For a smooth path $a: [0,1] \to \mathbb{R}^{H \times d_a}$ with $\dot a_t = v_\theta(t, a_t, o)$ we use the standard velocity-variance surrogate
\begin{equation}
  \kappa(o) \;=\; \int_0^1 \| \dot a_t - \bar v \|_2^2 \, dt,
  \qquad
  \bar v = \int_0^1 \dot a_t \, dt,
  \label{eq:curvature}
\end{equation}
evaluated by finite differences along the Euler sampler with $N=100$ steps.
We compute $\kappa(o)$ over $2{,}000$ validation observations and report the average in \cref{tab:curvature}.
Every task exhibits a reduction in mean curvature, and the relative reduction tracks the success-rate gain of \cref{tab:main-sr}: tasks with the largest curvature reduction (\textsc{Square-MH}, \textsc{Transport-MH}) are also the tasks with the largest SR gain, supporting the straightening-explains-performance hypothesis.

\paragraph{Branching cost: an irreducible residual.}
The curvature reduction has a measure-theoretic origin we call the \emph{branching cost}.
Vectorize an action chunk into \(\mathbb R^d\), let \((A_0,A_1)\sim\Pi_o\) denote the conditional joint law of source and target, and write \(A_t=(1\!-\!t)A_0+tA_1\).
The flow-matching objective \(\mathcal L_o(v)=\int_0^1\mathbb E_{\Pi_o}[\|v_t(A_t,o)-(A_1-A_0)\|^2\mid o]\,dt\) regresses the transport direction \(A_1-A_0\), and because only \((A_t,o)\) is observable, the best attainable predictor is the conditional expectation \(v_t^\star(x,o)=\mathbb E[A_1-A_0\mid A_t=x,o]\).
The residual error this predictor cannot eliminate,
\begin{equation}
  \mathcal B(o)
  \;\coloneqq\;
  \mathcal L_o(v^\star)
  \;=\;
  \int_0^1
  \frac{\mathbb E\!\bigl[\|A_1-\mathbb E[A_1\!\mid\! A_t,o]\|^2\,\bigm|\,o\bigr]}{(1-t)^2}
  \,dt ,
  \label{eq:branching-cost}
\end{equation}
measures how ambiguous \(A_1\) remains after observing \(A_t\): when many distinct targets share an \(A_t\), \(v^\star\) must average over them and the trajectory bends.
A standard total-variance decomposition splits the coupling cost \(\mathbb E[\|A_1-A_0\|^2\mid o]\) into the kinetic action of \(v^\star\) plus \(\mathcal B(o)\) (see \cref{app:ot_detailed} for the full derivation); the second term is pure excess caused by directional ambiguity and vanishes for OT couplings, where \(\mathcal B\equiv 0\)~\cite{mccann1997,benamou2000}.

\paragraph{How WarmPrior reduces the branching cost.}
WarmPrior writes the source as \(A_0=P_{\mathcal W}(\mu+\sigma\Xi)+P_{\mathcal C}\Xi\) with \(\Xi\sim\mathcal N(0,I)\) independent of \((A_1,\mu)\) given \(o\), where \(P_{\mathcal W}\) projects onto the warm coordinates (\(d_{\mathcal W}\) dimensions) and \(P_{\mathcal C}=I-P_{\mathcal W}\).
Bounding the optimal predictor's error by that of the simpler predictor \(P_{\mathcal W}A_t\) cancels the \((1-t)^2\) factor in \eqref{eq:branching-cost} (\cref{app:ot_detailed}, Proposition~\ref{prop:app-warm-bound}), giving
\begin{equation}
  \mathcal B_{\mathcal W}(o)
  \;\le\;
  \underbrace{\mathbb E\!\left[\|P_{\mathcal W}(A_1\!-\!\mu)\|^2\,\middle|\,o\right]}_{\text{mean mismatch}}
  +\;\sigma^2 d_{\mathcal W}.
  \label{eq:warmprior-branching-bound}
\end{equation}
The warm-coordinate branching cost is therefore controlled by two intuitive quantities: how well the prior mean \(\mu\) predicts the target, and how much residual noise \(\sigma\) is injected.
This immediately explains the ordering of our variants.
\textbf{Preview} sets \(\mu\) to a forecast of the current chunk, so the mismatch reduces to the forecast error \(\mathbb E[\|E\|^2\mid o]\) with \(P_{\mathcal W}A_1 = P_{\mathcal W}\mu+E\); in the idealized limit of an exact forecast (\(E=0\)) only the irreducible \(\sigma^2 d_{\mathcal W}\) term survives.
\textbf{Past} reuses the previously executed chunk, replacing \(E\) with the persistence residual \(R\) between consecutive chunks and yielding the same form \((\text{prediction error})+\sigma^2 d_{\mathcal W}\).
Whenever the forecaster improves on persistence (\(\mathbb E[\|E\|^2\mid o]\le\mathbb E[\|R\|^2\mid o]\)), Preview attains a tighter bound, matching the ordering observed in \cref{tab:main-sr}.
In both cases WarmPrior acts as an amortized approximation to the OT coupling, shortening transport and suppressing the directional ambiguity that bends the learned field.

The bound also exposes a trade-off along \(\sigma\), an axis separate from aligning \(\mu\): smaller \(\sigma\) tightens \(\sigma^2 d_{\mathcal W}\) in \cref{eq:warmprior-branching-bound} and favors a straighter field, but concentrates the source onto \(\mu\), which only helps if \(\mu\) is reliable.
In practice it is not (WP-Past carries the persistence residual, WP-Preview the forecast error), so an overly small \(\sigma\) leaves no slack to absorb this variability and degrades success rate.
The right \(\sigma\) balances straightness against robustness to prior-mean diversity; we defer the full ablation to \cref{app:sigma-ablation}.

\begin{figure}[t]
  \centering
  \includegraphics[width=\textwidth]{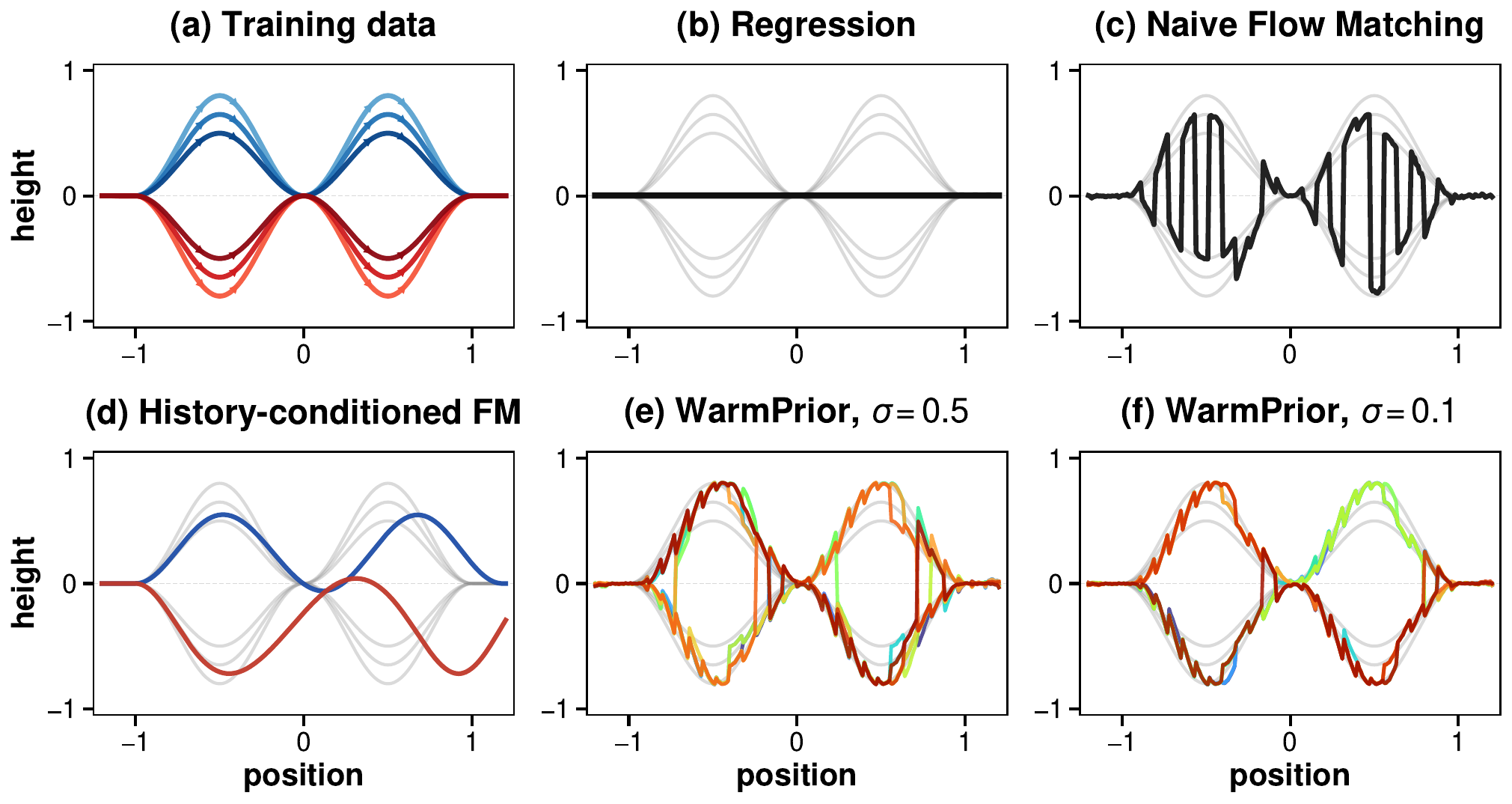}
  \caption{\textbf{Mode switching in a 1D navigation toy.}
  All policies share a 1024-d 4-layer MLP backbone trained for 50k iterations with batch size 256.
  Six demonstrations pass through two obstacles (three above, three below), inducing a bimodal $p(a\mid o)$ at each position.
  \textbf{(a)} training data; \textbf{(b)} regression collapses to the mean; \textbf{(c)} naive flow matching recovers both modes but oscillates between them; \textbf{(d)} history-conditioned flow matching commits within a rollout but drifts off-manifold under inference-time history shift; \textbf{(e, f)} WarmPrior at $\sigma\!=\!0.1$ and $\sigma\!=\!0.5$ commits per rollout, with $\sigma$ tuning between temporal consistency and multimodality.}
  \label{fig:toy}
\end{figure}

\subsection{WarmPrior as a Tunable Source of Temporal Consistency}
\label{sec:ac1}

Beyond the geometric benefit of \cref{sec:straight}, WarmPrior provides a second, independent advantage: a \emph{tunable} form of \emph{implicit temporal consistency} between consecutive inferences.

\paragraph{Mode switching in generative control policies.}
Consider the 1D navigation toy in \cref{fig:toy}(a), where the observation $o$ is the agent's horizontal position and the action $a$ is its vertical height. Demonstrations split evenly between passing above and passing below each obstacle, so the conditional distribution $p(a \mid o)$ is multimodal at every $o$.
A regression policy averages the branches and collapses to the mean, driving straight through the obstacle (\cref{fig:toy}, panel b).
A flow-matching policy trained on the standard objective recovers both modes, but only at the level of \emph{per-inference marginals}: the objective places no constraint linking the chunk produced at one inference to the chunk produced at the next.
The policy is therefore free to pick a different mode at each inference, yielding an execution that oscillates rapidly between them, a pathology we term \textbf{\emph{mode switching}} (\cref{fig:toy}, panel c).
Action chunking~\cite{zhao2023aloha} enforces commitment \emph{within} a chunk, but the objective still treats consecutive chunks independently and the oscillation persists at every chunk boundary.
A natural remedy is to condition the policy on the \emph{action history} $h$, but naive history conditioning is costly and fragile: it substantially slows convergence and inflates per-step compute and memory~\cite{koo2026hamlet}, and has two further drawbacks. First, conditioning on $h$ pins the policy to whichever mode its history already commits to, reducing the effective multimodality of $p(a \mid o)$: in \cref{fig:toy}(d), every rollout follows either an above-above or a below-below path, with no recombination across branches. Second, at inference time small execution errors compound into a distributional shift over $h$.

\paragraph{Tunable temporal consistency via $\sigma$.}
Because the prior mean is correlated with the previous action chunk, a small $\sigma$ keeps the new chunk inside the nearby mode's basin and prevents the flow from crossing between distant modes (\cref{fig:toy}, panel e), while a large $\sigma$ broadens the source and recovers more of the multimodal distribution at every step (\cref{fig:toy}, panel f).
The prior variance therefore acts as a continuous \emph{regulator} between \emph{temporal consistency} and \emph{multimodality}.
Crucially, unlike history conditioning or long action chunks that \emph{explicitly} enforce temporal consistency at training or inference time, WarmPrior only \emph{implicitly} biases the source distribution: within each rollout it commits the policy to a single coherent mode, while leaving the generative policy's room for multimodality intact across rollouts.

\paragraph{Isolating the consistency effect at $H=1$.}
To strip away explicit chunking and isolate the prior's implicit consistency bias, we set $H=1$, so the policy runs a fresh inference every timestep and the WarmPrior becomes the sole source of inter-step consistency.
\cref{fig:ac1-bar} reports SR at $H=1$ and NFE $=1$.
The $\mathcal{N}(0,I)$ baseline degrades sharply (e.g.\ \textsc{Transport-MH}: $23.3\%\!\to\!1.3\%$), while WarmPrior recovers most of the lost performance, with gains of up to $+14.8$ on MimicGen \textsc{Coffee}.
These gains are consistently larger than at the default $H=8$, confirming that the prior carries more weight once explicit chunking is stripped away.

\paragraph{Practical implication.}
Action chunking locks the policy to an $H$-step plan and cannot react to new observations within a chunk, which is a liability on tasks requiring fast reactivity.
WarmPrior offers an alternative that \emph{preserves temporal consistency while allowing per-step re-planning}, and we see this direction as a promising avenue for future work.

\subsection{WarmPrior Improves Prior-Space RL Efficiency}
\label{sec:rl}

Beyond behavior cloning, the same source-distribution shaping extends to the RL stage: using the WarmPrior-pretrained policy as the frozen base, the prior mean additionally reshapes the exploration space of prior-space reinforcement learning and yields a substantial efficiency gain.

\paragraph{Background: DSRL.}
DSRL~\cite{dsrl2025} fine-tunes a pretrained diffusion policy with reinforcement learning by acting in the prior space: instead of having the RL agent output actions directly, the agent proposes the prior sample $a_0$, which is then mapped deterministically to an action $a_1$ via the frozen pretrained policy's ODE sampler (DDIM~\cite{song2021ddim} or flow-matching).
The RL action space is $\mathbb{R}^{H \times d_a}$ with the shape of the prior.
By acting on the prior sample, DSRL eliminates the need to backpropagate through the diffusion sampler and makes the policy compatible with off-the-shelf RL algorithms.
Two algorithms are commonly used within this framework: DSRL-SAC, which applies SAC~\cite{haarnoja2018sac} directly to the noise-space MDP, and DSRL-NA, which exploits the diffusion policy's noise-aliasing structure through a dual-critic scheme that distills an action-space critic $Q^A$ into a noise-space critic $Q^W$.
However, the exploration space remains the uninformative $\mathcal{N}(0,I)$ prior, forcing the RL agent to search across the full prior from scratch.

\paragraph{Method: Conditioned-residual WarmPrior.}
WarmPrior offers an immediate structural improvement: because the WarmPrior mean is already close to the target action manifold, the RL agent only has to learn a bounded \emph{residual} around it.
Concretely, we extend the observation to $\tilde{o} = [o, \mu]$ and bound the RL action to a small magnitude $\delta$:
\begin{equation}
  a_0 = \mu + \Delta, \quad \Delta = \pi_{\mathrm{RL}}(\tilde o) \in [-\delta, \delta]^{H \times d_a}.
  \label{eq:option5}
\end{equation}
In practice we set $\delta = 0.5$, compared to $\delta = 1.5$ used by vanilla DSRL.
The RL agent now explores a $3\times$ tighter region centered on a temporally grounded WarmPrior mean rather than the origin, so the agent no longer searches the full prior from scratch and instead refines a local correction around an anchor that is already a competent action.
The RL policy also receives the prior mean $\mu$ as part of its augmented observation $\tilde{o}$. Since $\mu$ already encodes past chunks, appending it to the observation absorbs that dependency into the state so the RL problem stays Markovian, and it lets the residual $\Delta$ adapt to the current anchor.

\begin{figure}[t]
  \centering
  \begin{minipage}[b]{0.485\linewidth}
    \centering
    \includegraphics[width=\linewidth]{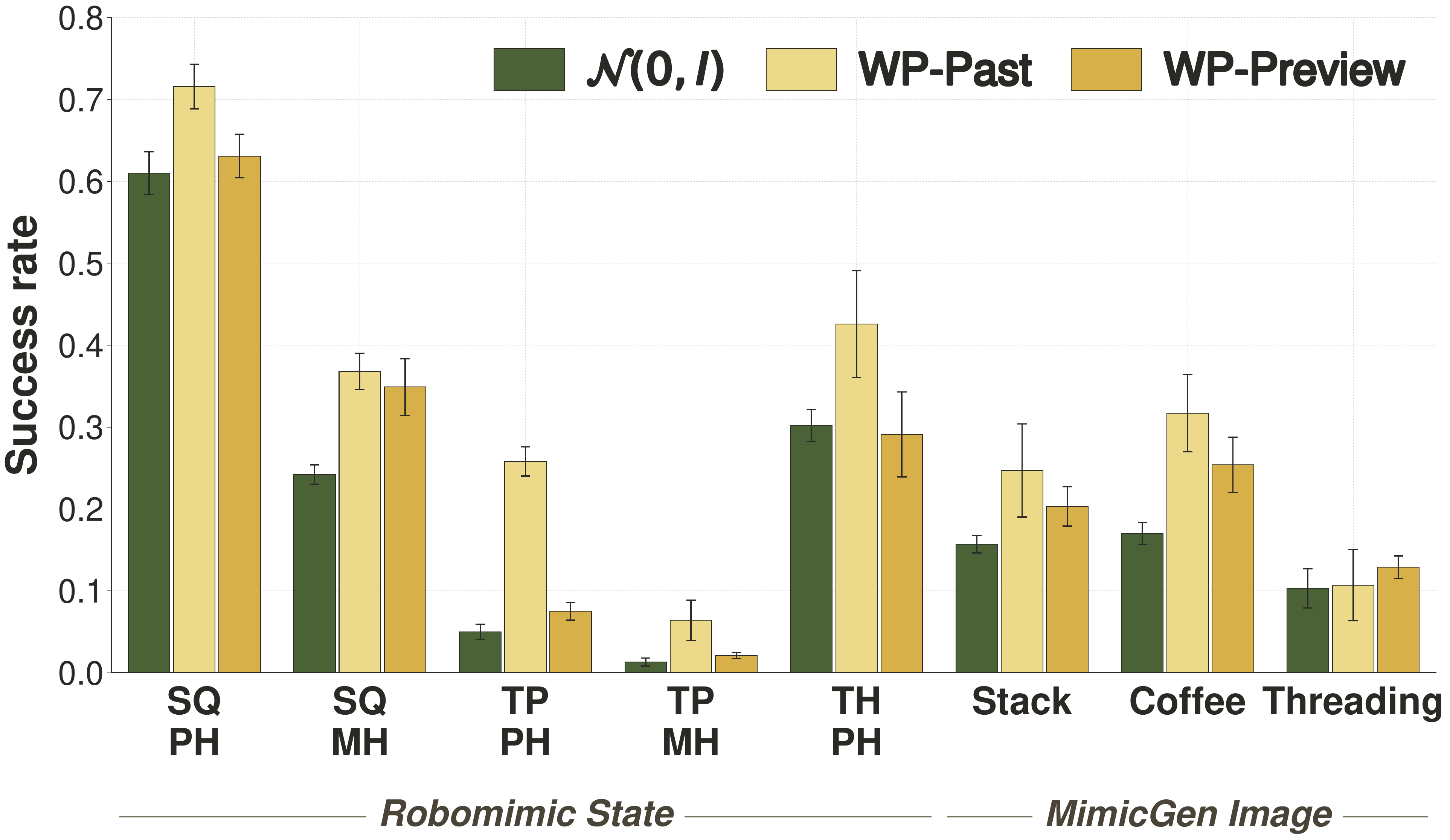}
    \caption{\textbf{Action-chunk length $H=1$ results (NFE$=1$).} Five Robomimic state tasks and three MimicGen image tasks.}
    \label{fig:ac1-bar}
  \end{minipage}%
  \hspace{1.0em}%
  \begin{minipage}[b]{0.485\linewidth}
    \centering
    \includegraphics[width=\linewidth]{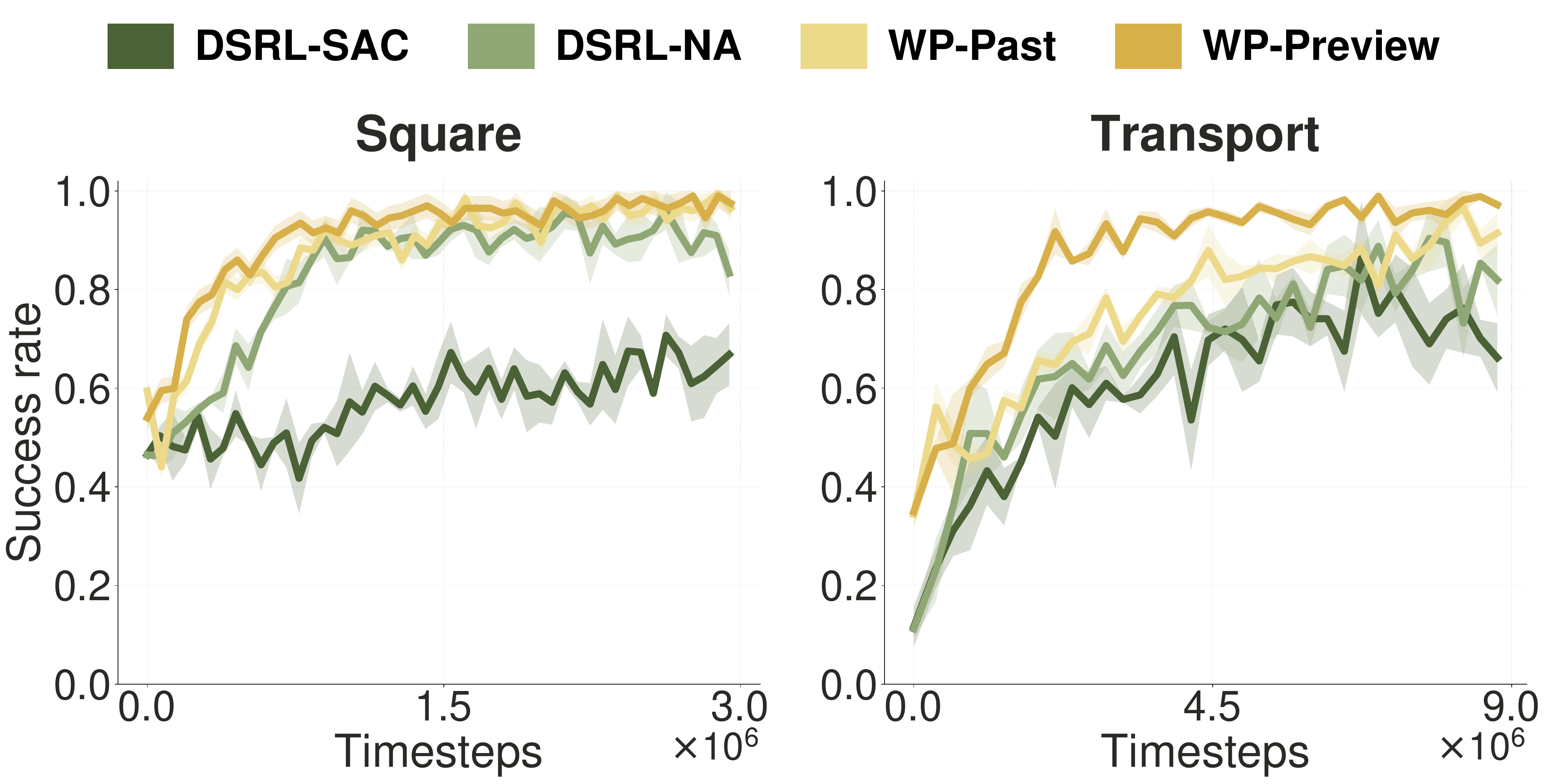}
    \caption{\textbf{Prior-space RL.} DSRL baselines vs.\ WarmPrior variants on Robomimic \textsc{Square} and \textsc{Transport}, averaged over 3 seeds ($\pm 1\sigma$ shading).}
    \label{fig:dsrl-vs-ours}
  \end{minipage}
\end{figure}

\paragraph{Setup.}
Among the Robomimic tasks, \textsc{Lift} and \textsc{Can} are already near-saturated under BC, so we run RL fine-tuning on \textsc{Square} and \textsc{Transport} with a frozen WarmPrior backbone pretrained by behavior cloning for 3000 epochs.
Our WP-Past and WP-Preview instantiate DSRL-NA with the conditioned residual of \eqref{eq:option5}, and we compare against vanilla DSRL-SAC and DSRL-NA as baselines.

\paragraph{Findings.}
\cref{fig:dsrl-vs-ours} shows that WP-Past and WP-Preview learn faster, converge more stably, and reach a higher asymptote than both DSRL baselines: both consistently exceed $0.99$ on \textsc{Square}, and on \textsc{Transport} they attain $\sim\!0.97$, while DSRL-NA and DSRL-SAC plateau around $0.9$.
To our knowledge, this is the first result to stably surpass $95\%$ success on \textsc{Transport}, the hardest of the Robomimic tasks, by RL fine-tuning of a flow-matching policy.
Because WarmPrior provides an efficient, semantically meaningful prior space centered on $\mu(o, h)$, searching over it is far more valuable than exploring an uninformed random noise space.

\section{Conclusion}
\label{sec:conclusion}

We revisited the source distribution of generative robotic policies and showed that replacing the uninformative $\mathcal{N}(0,I)$ with a temporally grounded WarmPrior consistently improves success rate on Robomimic, MimicGen, and a real Franka setup.
This single design choice straightens the learned flow in an OT-aligned sense, exposes a continuous $\sigma$-knob between within-rollout consistency and multimodal expressiveness, and shrinks the search space of prior-space RL on top of the pretrained policy.
Because WarmPrior leaves the network, interpolant, and loss untouched, we view the prior distribution as a new axis worth exploring in generative-policy design.

\clearpage
\bibliographystyle{plainnat}
\bibliography{example_paper}

%%%%%%%%%%%%%%%%%%%%%%%%%%%%%%%%%%%%%%%%%%%%%%%%%%%%%%%%%%%%
% APPENDIX
%%%%%%%%%%%%%%%%%%%%%%%%%%%%%%%%%%%%%%%%%%%%%%%%%%%%%%%%%%%%

\clearpage
\appendix

\section{Visualizing Success-Rate Uncertainty with Beta Posteriors}
\label{app:ac1}

We adopt the evaluation philosophy of~\citet{tri2025lbm}, which argues that single-number means with Gaussian error bars are an impoverished summary of policy performance and instead pushes for full posterior visualizations of the success-rate parameter. A seed-standard-error bar implicitly assumes the per-seed SR is symmetric and well approximated by a Gaussian; for a Bernoulli event near $0$ or $1$ the likelihood is skewed and the bar would cross an impossible boundary, and it conveys nothing about how \emph{overlapping} two methods' distributions are.
We therefore complement the bar charts in the main paper with Beta-posterior violins: for each (method, task) cell with $k$ successes out of $n = 200 \times 3 = 600$ rollouts (200 episodes per seed, 3 seeds), we visualize the posterior Beta$(k\!+\!1,\, n\!-\!k\!+\!1)$ under a uniform Beta$(1,1)$ prior. The violin width at $y$ is proportional to the posterior PDF, and the horizontal tick marks the posterior mean.
This makes three things easy to read off: (i) the plot is bounded to $[0,1]$ and skewed near the edges, so near-saturated and near-zero cells are rendered honestly; (ii) posterior \emph{overlap} between two methods is immediate, a more faithful proxy for significance than non-overlapping error bars; and (iii) high-variance cells (flatter violins) are visually distinguishable from confidently-estimated ones (tight violins).

One caveat: pooling all $600$ rollouts into a single Beta posterior treats them as i.i.d.\ Bernoulli draws from a common success probability, folding the three seeds' (different) policies into a single rate. This captures within-policy sampling uncertainty but absorbs across-seed (policy-level) variance into the same Bernoulli noise, so the posterior should be read as an estimate of the \emph{pooled} success rate; the seed-SE bars in the main paper remain the appropriate reference for method-level uncertainty.

\cref{fig:main-violin} shows this analysis for the main setting ($H=8$, NFE$=1$), and \cref{fig:ac1-violin} shows it when chunking is disabled ($H=1$, NFE$=1$).

\begin{figure}[H]
  \centering
  \includegraphics[width=0.95\textwidth]{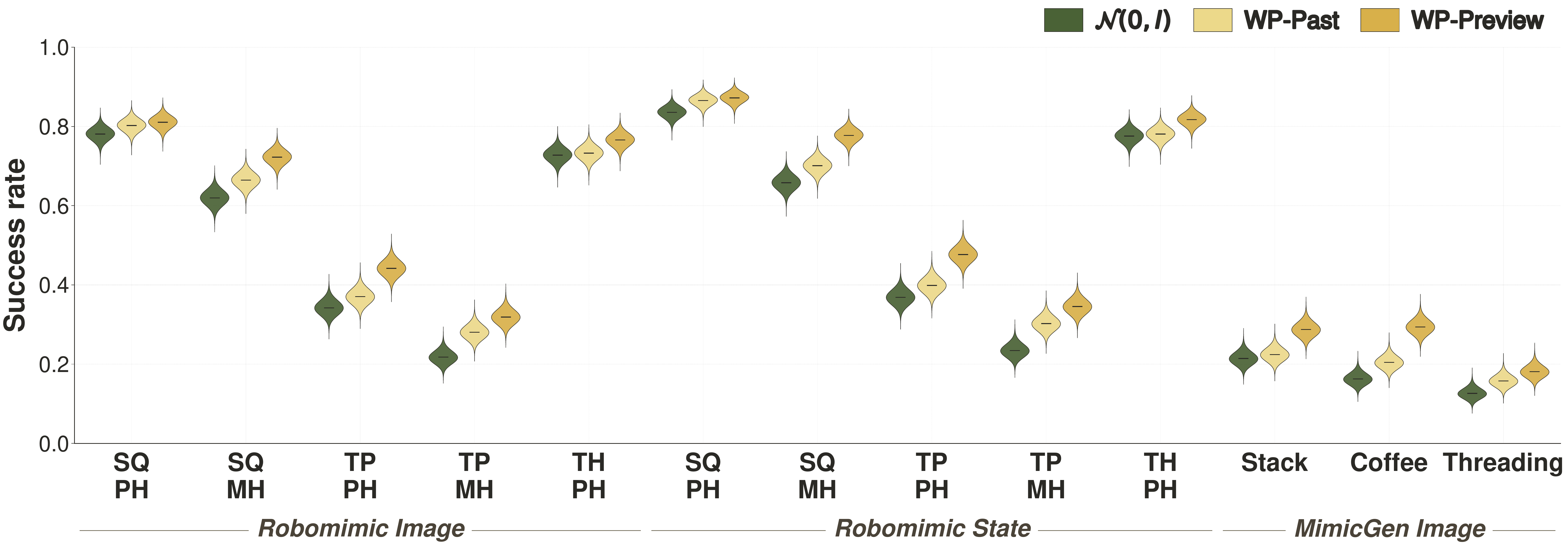}
  \caption{\textbf{Main results ($H=8$, NFE$=1$): Beta-posterior violins.} Same data as the NFE$=1$ column of \cref{tab:main-sr}; the violin width is proportional to the Beta$(k+1, n-k+1)$ posterior PDF over the success rate, with $n=600$ rollouts per cell.}
  \label{fig:main-violin}
\end{figure}

\begin{figure}[H]
  \centering
  \includegraphics[width=0.7\linewidth]{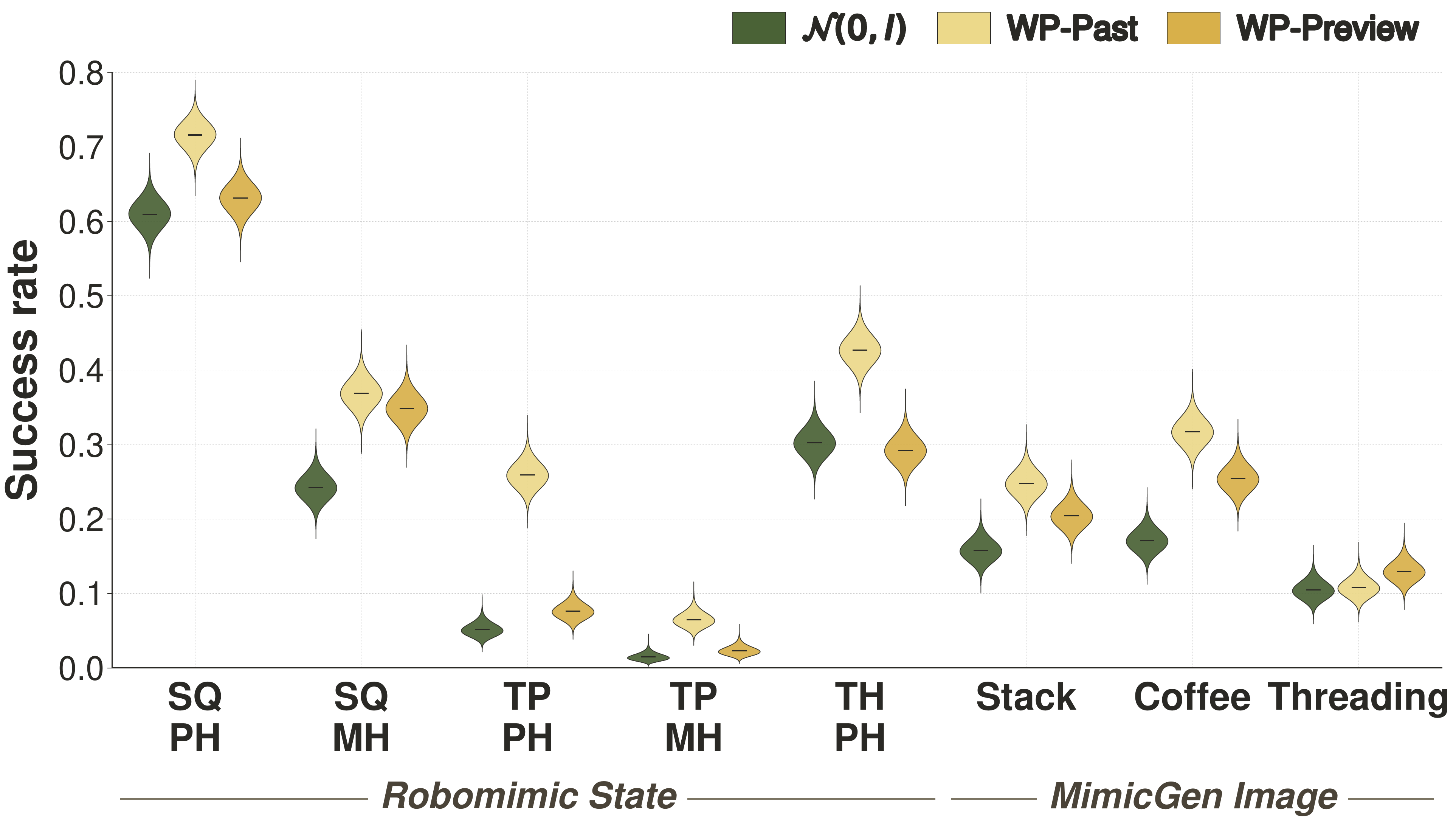}
  \caption{\textbf{Action-chunk length $H=1$ results (NFE$=1$): Beta-posterior violins.} Same data as \cref{fig:ac1-bar}; visualization is identical in style to \cref{fig:main-violin}.}
  \label{fig:ac1-violin}
\end{figure}

\section{Why WarmPrior Straightens Flows: A Branching-Cost Analysis}
\label{app:ot_detailed}

This appendix develops the theory underlying WarmPrior. We first formalize how endpoint ambiguity in the source-target coupling induces a branching cost in the learned flow (\cref{app:subsec-branching}), and then derive a bound showing how WarmPrior provably reduces this cost (\cref{app:subsec-warmprior-bound}).
For readability, we vectorize an action chunk into a single vector in \(\mathbb R^d\), where \(d = H d_a\), and condition throughout on the policy input \(o\).
We write \((A_0,A_1)\sim \Pi_o\) for the conditional joint law induced by the training procedure, where \(A_0\) is the source sample and \(A_1\) is the target action chunk.
Under the linear interpolant,
\begin{equation}
  A_t = (1-t)A_0 + tA_1,
  \qquad t\in[0,1].
  \label{eq:app-linear-interpolant}
\end{equation}

\paragraph{What the flow-matching loss is regressing.}
For the linear interpolant, the target velocity is
\[
  \dot A_t = A_1 - A_0.
\]
Hence the population flow-matching objective for a velocity field \(v_t(\cdot,o)\) is
\begin{equation}
  \mathcal L_o(v)
  \;\coloneqq\;
  \int_0^1
  \mathbb E\!\left[
    \|v_t(A_t,o) - (A_1-A_0)\|_2^2
    \,\middle|\,
    o
  \right]dt .
  \label{eq:app-pop-fm}
\end{equation}
This is simply an \(L^2\) regression problem: from the observable pair \((A_t,o)\), the network tries to predict the transport direction \(A_1-A_0\).

\subsection{The branching cost as endpoint ambiguity}
\label{app:subsec-branching}

The next theorem states that the irreducible error of this regression problem is exactly the conditional variance of the endpoint \(A_1\) given the intermediate point \(A_t\).
This is the sense in which path intersection or branching creates curvature: if many distinct endpoints are compatible with the same intermediate point, the vector field must average over them.

\begin{theorem}[Exact formula for the branching cost]
\label{thm:app-branching-cost}
For each \(t\in[0,1)\), the unique minimizer of \(\mathcal L_o(v)\) is
\begin{equation}
  v_t^\star(x,o)
  =
  \mathbb E[A_1\!-\!A_0 \mid A_t\!=\!x,o]
  =
  \frac{\mathbb E[A_1\mid A_t\!=\!x,o]-x}{1-t}.
  \label{eq:app-bayes-field}
\end{equation}
Define
\begin{equation}
  \mathcal B(o)
  \;\coloneqq\;
  \mathcal L_o(v^\star).
  \label{eq:app-def-branching}
\end{equation}
Then
\begin{equation}
  \mathcal B(o)
  =
  \int_0^1
  \frac{1}{(1\!-\!t)^2}
  \mathbb E\!\left[
    \|A_1\!-\!\mathbb E[A_1\!\mid\! A_t,o]\|_2^2
    \,\middle|\,
    o
  \right]dt .
  \label{eq:app-branching-formula}
\end{equation}
\end{theorem}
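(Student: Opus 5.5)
The plan is to treat each time slice separately and reduce everything to the $L^2$ projection property of conditional expectation. Because the integrand of $\mathcal L_o(v)$ at time $t$ depends only on $v_t(\cdot,o)$, minimizing $\mathcal L_o$ over fields means minimizing, for almost every $t$, the inner risk
\[
  \mathbb E\!\left[\|v_t(A_t,o)-(A_1-A_0)\|_2^2 \,\middle|\, o\right].
\]
This is an $L^2$ regression of the random vector $A_1-A_0$ on the $\sigma$-algebra generated by $(A_t,o)$. The standard Pythagorean decomposition gives, for any measurable $g$ with $g(A_t)$ square integrable,
\[
  \mathbb E\|g(A_t)-Y\|^2 = \mathbb E\|g(A_t)-\mathbb E[Y\mid A_t,o]\|^2+\mathbb E\|Y-\mathbb E[Y\mid A_t,o]\|^2,
\]
where $Y=A_1-A_0$. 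The cross term vanishes by the tower property. So the minimizer is $v_t^\star(x,o)=\mathbb E[A_1-A_0\mid A_t=x,o]$, and it is unique $\mathrm{Law}(A_t\mid o)$-almost everywhere. I will assume $\mathbb E[\|A_0\|^2+\|A_1\|^2\mid o]<\infty$, and I will state uniqueness up to null sets.

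Next, for $t<1$ I eliminate $A_0$ using the interpolant. From $A_t=(1-t)A_0+tA_1$ we get $A_0=(A_t-tA_1)/(1-t)$, and hence
\[
  A_1-A_0=\frac{A_1-A_t}{1-t}.
\]
Conditioning on $A_t=x$, where $A_t$ is measurable, yields the second expression in \eqref{eq:app-bayes-field}. The same identity shows that the residual satisfies
\[
  Y-\mathbb E[Y\mid A_t,o]=\frac{A_1-\mathbb E[A_1\mid A_t,o]}{1-t},
\]
because the $A_t$ terms cancel. Plugging this into the irreducible term of the Pythagorean decomposition and integrating over $t\in[0,1)$ gives \eqref{eq:app-branching-formula}. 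The endpoint $t=1$ has measure zero and can be ignored.

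The main obstacle is measure-theoretic rather than algebraic. The point-wise-in-$t$ minimization must be justified jointly, so I need $(t,x)\mapsto v_t^\star(x,o)$ to be jointly measurable, and $\mathcal L_o(v^\star)$ could be $+\infty$ near $t\to1$ because of the $(1-t)^{-2}$ factor. For measurability, I will regard $t$ as an independent uniform variable and take a single conditional expectation given $(t,A_t,o)$; this produces a jointly measurable version. For possible divergence, the identity holds in $[0,\infty]$, and the Pythagorean argument goes through with both sides infinite. The finite-minimizer claim only requires that the per-$t$ risk of $v^\star$ be at most $\mathbb E\|A_1-A_0\|^2$, which is finite. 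In fact that bound shows the integrand of \eqref{eq:app-branching-formula} is bounded by $\mathbb E[\|A_1-A_0\|^2\mid o]$, so $\mathcal B(o)$ is finite after all.
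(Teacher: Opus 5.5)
Your proposal is correct and takes essentially the same route as the paper. You identify $v_t^\star=\mathbb E[A_1-A_0\mid A_t,o]$ via the per-$t$ $L^2$ projection onto $\sigma(A_t,o)$, and then use the interpolant identity $A_1-A_0=(A_1-A_t)/(1-t)$ to rewrite both the field and the residual. Your extra points go beyond what the paper states and are sound: uniqueness only up to null sets, joint measurability in $t$, and the uniform bound $\mathbb E[\|A_1-A_0\|^2\mid o]$ on the integrand, which shows $\mathcal B(o)<\infty$.
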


\paragraph{Proof sketch.}
The only information available to the predictor is \((A_t,o)\), so the best possible \(L^2\) predictor of the target velocity \(A_1-A_0\) is its conditional expectation given \((A_t,o)\).
The minimum mean-squared error is therefore the conditional variance of that target velocity.
For the linear interpolant, \(A_1-A_0 = (A_1-A_t)/(1-t)\), so this conditional variance can be rewritten directly in terms of the ambiguity of the endpoint \(A_1\) after observing the intermediate point \(A_t\).

\begin{proof}
Fix \(t<1\), and define
\[
  \Delta \;\coloneqq\; A_1 - A_0.
\]
The integrand of \eqref{eq:app-pop-fm} is an \(L^2\) regression problem:
among all \((A_t,o)\)-measurable square-integrable random variables, the unique minimizer of
\[
  g \;\mapsto\; \mathbb E[\|g-\Delta\|_2^2 \mid o]
\]
is the orthogonal projection of \(\Delta\) onto the sigma-field generated by \((A_t,o)\), namely
\[
  g^\star = \mathbb E[\Delta \mid A_t,o].
\]
Therefore
\[
  v_t^\star(A_t,o) = \mathbb E[A_1-A_0 \mid A_t,o].
\]
This proves the first equality in \eqref{eq:app-bayes-field}.

To prove the second equality, observe from \eqref{eq:app-linear-interpolant} that
\[
  A_t = (1-t)A_0 + tA_1
  \;\Longrightarrow\;
  A_1 - A_0 = \frac{A_1-A_t}{1-t}.
\]
Taking the conditional expectation given \(A_t=x\) and \(o\) yields
\[
  v_t^\star(x,o)
  =
  \mathbb E[A_1\!-\!A_0 \mid A_t\!=\!x,o]
  =
  \frac{\mathbb E[A_1\mid A_t\!=\!x,o]-x}{1-t},
\]
which proves \eqref{eq:app-bayes-field}.

We now compute the minimum value of the risk.
By the standard projection identity for conditional expectation,
\[
  \inf_g \mathbb E[\|g-\Delta\|_2^2\mid o]
  =
  \mathbb E[\|\Delta-\mathbb E[\Delta\mid A_t,o]\|_2^2 \mid o].
\]
Hence
\[
  \mathcal B(o)
  =
  \int_0^1
  \mathbb E\!\left[
    \|\Delta-\mathbb E[\Delta\mid A_t,o]\|_2^2
    \,\middle|\,
    o
  \right]dt.
\]
Using again \(\Delta=(A_1-A_t)/(1-t)\), we obtain
\[
  \Delta-\mathbb E[\Delta\mid A_t,o]
  =
  \frac{A_1-\mathbb E[A_1\mid A_t,o]}{1-t}.
\]
Substituting this into the previous display gives
\eqref{eq:app-branching-formula}.
\end{proof}

\paragraph{Interpretation.}
The quantity \(\mathcal B(o)\) is the irreducible flow-matching error under the coupling \(\Pi_o\).
Equation~\eqref{eq:app-branching-formula} shows that it is exactly the time-integrated ambiguity of the endpoint \(A_1\) after observing the intermediate point \(A_t\).
If \(A_t\) almost surely determines \(A_1\), then \(\mathcal B(o)=0\), meaning there is no branching ambiguity for the vector field to average over.
This is the ideal non-branching situation realized by OT-style Monge couplings.

\subsection{A detailed derivation of the WarmPrior bound}
\label{app:subsec-warmprior-bound}

We now prove the bound used in the main text.
Let \(P_{\mathcal W}\) denote the orthogonal projection onto the warm coordinates and let \(P_{\mathcal C}=I-P_{\mathcal W}\) be the projection onto the cold coordinates.
We write
\[
  d_{\mathcal W} \;\coloneqq\; \operatorname{tr}(P_{\mathcal W}),
\]
so that \(d_{\mathcal W}\) is exactly the number of warm scalar coordinates.

Under WarmPrior, the source sample takes the form
\begin{equation}
  A_0
  =
  P_{\mathcal W}(\mu + \sigma \Xi)
  +
  P_{\mathcal C}\Xi,
  \quad
  \Xi \sim \mathcal N(0,I_d),
  \label{eq:app-warmprior}
\end{equation}
where \(\Xi\) is conditionally independent of \((A_1,\mu)\) given \(o\).
The first term says that on the warm coordinates we start near a structured mean \(\mu\), while on the cold coordinates we keep the vanilla Gaussian prior.

To isolate the effect of the warm part, define the warm-coordinate branching cost by
\begin{equation}
  \mathcal B_{\mathcal W}(o)
  \;\coloneqq\;
  \int_0^1
  \frac{1}{(1\!-\!t)^2}
  \mathbb E\!\left[
    \|P_{\mathcal W}A_1 \!-\! \mathbb E[P_{\mathcal W}A_1\!\mid\! A_t,o]\|_2^2
    \,\middle|\,
    o
  \right]dt.
  \label{eq:app-warm-branching}
\end{equation}

\begin{proposition}[WarmPrior upper bound on the warm-coordinate branching cost]
\label{prop:app-warm-bound}
Under \eqref{eq:app-warmprior},
\begin{equation}
  \mathcal B_{\mathcal W}(o)
  \;\le\;
  \mathbb E\!\left[
    \|P_{\mathcal W}(A_1\!-\!\mu)\|_2^2
    \,\middle|\,
    o
  \right]
  +
  \sigma^2 d_{\mathcal W}.
  \label{eq:app-warm-bound}
\end{equation}
\end{proposition}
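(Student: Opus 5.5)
The plan is to bound, for each fixed \(t\in[0,1)\), the conditional-variance integrand of \eqref{eq:app-warm-branching} by the mean-squared error of a simple, explicitly computable predictor. The natural candidate is the one named in the main text, \(P_{\mathcal W}A_t\). The factor \(1-t\) that this predictor's error carries should then cancel the \((1-t)^{-2}\) weight, after which the integrand no longer depends on \(t\).

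\emph{Step 1 (comparison predictor).} As in the proof of Theorem~\ref{thm:app-branching-cost}, \(\mathbb E[P_{\mathcal W}A_1\mid A_t,o]\) is the \(L^2\)-orthogonal projection of \(P_{\mathcal W}A_1\) onto the \((A_t,o)\)-measurable square-integrable variables. Its error is therefore no larger than that of any other such variable. Since \(P_{\mathcal W}A_t\) is a fixed linear function of \(A_t\), it is admissible, and
\[
  \mathbb E\!\left[\|P_{\mathcal W}A_1-\mathbb E[P_{\mathcal W}A_1\mid A_t,o]\|_2^2\,\middle|\,o\right]
  \le
  \mathbb E\!\left[\|P_{\mathcal W}A_1-P_{\mathcal W}A_t\|_2^2\,\middle|\,o\right].
\]

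\emph{Step 2 (algebra).} From \eqref{eq:app-linear-interpolant} we have \(A_1-A_t=(1-t)(A_1-A_0)\). Using \eqref{eq:app-warmprior} together with \(P_{\mathcal W}P_{\mathcal C}=0\) and \(P_{\mathcal W}^2=P_{\mathcal W}\), this gives
\[
  P_{\mathcal W}(A_1-A_t)=(1-t)\bigl(P_{\mathcal W}(A_1-\mu)-\sigma P_{\mathcal W}\Xi\bigr).
\]
Dividing the squared norm by \((1-t)^2\) removes all \(t\)-dependence.

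\emph{Step 3 (expectation).} Expand the square \(\|P_{\mathcal W}(A_1-\mu)-\sigma P_{\mathcal W}\Xi\|_2^2\). The cross term has zero conditional expectation, because \(\Xi\) is conditionally independent of \((A_1,\mu)\) given \(o\) and has mean zero. The noise term contributes
\[
  \sigma^2\,\mathbb E\|P_{\mathcal W}\Xi\|_2^2=\sigma^2\operatorname{tr}(P_{\mathcal W})=\sigma^2 d_{\mathcal W}.
\]
The integrand of \eqref{eq:app-warm-branching} is thus bounded, for every \(t<1\), by the constant \(\mathbb E[\|P_{\mathcal W}(A_1-\mu)\|_2^2\mid o]+\sigma^2 d_{\mathcal W}\). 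Integrating over \([0,1]\) gives \eqref{eq:app-warm-bound}; the single point \(t=1\) has Lebesgue measure zero and does not affect the integral.

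The only delicate points are technical. We need square-integrability of \(A_1\) and \(\mu\) given \(o\), which we assume, since otherwise the right-hand side is infinite and the bound holds trivially. We also need the conditional independence that kills the cross term. No step appears to be a genuine obstacle; the key observation is the choice of \(P_{\mathcal W}A_t\) as comparison predictor, which makes the \((1-t)\) cancellation exact.
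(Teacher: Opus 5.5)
Your proposal is correct and follows the paper's proof essentially step for step: the comparison predictor $P_{\mathcal W}A_t$, the exact $(1-t)$ cancellation coming from the linear interpolant, and the mismatch-plus-noise decomposition in which conditional independence kills the cross term. Your remarks on square-integrability and on the null set $t=1$ are small technical points that the paper leaves implicit.
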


\paragraph{Proof sketch.}
The proof has one key idea.
The conditional expectation
\(\mathbb E[P_{\mathcal W}A_1\mid A_t,o]\)
is the \emph{best} predictor of the warm endpoint from \((A_t,o)\), so we may upper-bound its error by evaluating the same error at any simpler predictor.
We choose the very simple predictor \(P_{\mathcal W}A_t\), because it is directly observable from the interpolated sample and because, under the linear interpolant, the difference \(P_{\mathcal W}A_1-P_{\mathcal W}A_t\) contains an explicit factor of \((1-t)\) that exactly cancels the prefactor \(1/(1-t)^2\) in \eqref{eq:app-warm-branching}.
After this cancellation, the remaining expression splits into a mean-mismatch term and a Gaussian-noise term.

\begin{proof}
We proceed in three explicit steps.

\paragraph{Step 1: replace the optimal predictor with a tractable surrogate.}
For any square-integrable random variables \(Y\) and \(X\), the conditional expectation \(\mathbb E[Y\mid X]\) is the unique minimizer of
\[
  g \;\mapsto\; \mathbb E[\|Y-g(X)\|_2^2].
\]
Equivalently,
\begin{equation}
  \mathbb E\!\left[
    \|Y\!-\!\mathbb E[Y\!\mid\! X]\|_2^2
  \right]
  \le
  \mathbb E\!\left[
    \|Y\!-\!g(X)\|_2^2
  \right]
  \quad\text{for every measurable }g.
  \label{eq:app-best-predictor}
\end{equation}
We apply this with
\[
  Y = P_{\mathcal W}A_1,
  \;\;
  X = (A_t,o),
  \;\;
  g(A_t,o)=P_{\mathcal W}A_t.
\]
This choice is valid because \(P_{\mathcal W}A_t\) is clearly measurable with respect to \((A_t,o)\).
Using \eqref{eq:app-best-predictor} inside \eqref{eq:app-warm-branching} gives
\begin{equation}
  \mathcal B_{\mathcal W}(o)
  \le
  \int_0^1
  \frac{1}{(1\!-\!t)^2}
  \mathbb E\!\left[
    \|P_{\mathcal W}A_1 \!-\! P_{\mathcal W}A_t\|_2^2
    \,\middle|\,
    o
  \right]dt.
  \label{eq:app-bound-step1}
\end{equation}

\paragraph{Step 2: express the bound in the WarmPrior parameters \((\mu,\sigma)\).}
From \(A_t=(1-t)A_0+tA_1\), we have
\[
  P_{\mathcal W}A_t
  =
  (1\!-\!t)P_{\mathcal W}A_0 + tP_{\mathcal W}A_1.
\]
Hence
\begin{equation}
  P_{\mathcal W}A_1 - P_{\mathcal W}A_t
  =
  (1\!-\!t)\bigl(P_{\mathcal W}A_1 - P_{\mathcal W}A_0\bigr).
  \label{eq:app-step2a}
\end{equation}
Now substitute the WarmPrior form \eqref{eq:app-warmprior}:
on the warm coordinates,
\[
  P_{\mathcal W}A_0 = P_{\mathcal W}(\mu+\sigma\Xi).
\]
Therefore
\begin{equation}
  P_{\mathcal W}A_1 - P_{\mathcal W}A_0
  =
  P_{\mathcal W}(A_1\!-\!\mu) - \sigma P_{\mathcal W}\Xi.
  \label{eq:app-step2b}
\end{equation}
Combining \eqref{eq:app-step2a} and \eqref{eq:app-step2b},
\begin{equation}
  P_{\mathcal W}A_1 - P_{\mathcal W}A_t
  =
  (1\!-\!t)\Bigl(P_{\mathcal W}(A_1\!-\!\mu) - \sigma P_{\mathcal W}\Xi\Bigr).
  \label{eq:app-step2c}
\end{equation}

Squaring \eqref{eq:app-step2c} produces a \((1-t)^2\) factor that cancels the \(1/(1-t)^2\) prefactor in \eqref{eq:app-bound-step1}, leaving an integrand independent of \(t\). Substituting and integrating over \(t\in[0,1]\) yields
\begin{equation}
  \mathcal B_{\mathcal W}(o)
  \le
  \mathbb E\!\left[
    \bigl\|
      P_{\mathcal W}(A_1\!-\!\mu) - \sigma P_{\mathcal W}\Xi
    \bigr\|_2^2
    \,\middle|\,
    o
  \right].
  \label{eq:app-bound-step3}
\end{equation}

\paragraph{Step 3: decompose into mismatch and noise.}
Expand the squared norm:
\begin{equation}
  \bigl\|P_{\mathcal W}(A_1\!-\!\mu) - \sigma P_{\mathcal W}\Xi\bigr\|_2^2
  =
  \|P_{\mathcal W}(A_1\!-\!\mu)\|_2^2 + \sigma^2 \|P_{\mathcal W}\Xi\|_2^2
  - 2\sigma \bigl\langle P_{\mathcal W}(A_1\!-\!\mu),\, P_{\mathcal W}\Xi \bigr\rangle.
  \label{eq:app-expand-square}
\end{equation}
Taking the conditional expectation given \(o\), the cross term vanishes.
Indeed, by assumption, \(\Xi\) is conditionally independent of \((A_1,\mu)\) given \(o\), and has zero mean, so
\[
  \mathbb E\!\left[
    \bigl\langle P_{\mathcal W}(A_1\!-\!\mu),\, P_{\mathcal W}\Xi \bigr\rangle
    \,\middle|\,
    o
  \right]
  = 0.
\]
Therefore
\begin{equation}
  \mathbb E\!\left[
    \bigl\|P_{\mathcal W}(A_1\!-\!\mu) - \sigma P_{\mathcal W}\Xi\bigr\|_2^2
    \,\middle|\, o
  \right]
  =
  \mathbb E\!\left[
    \|P_{\mathcal W}(A_1\!-\!\mu)\|_2^2
    \,\middle|\, o
  \right]
  +
  \sigma^2 \mathbb E\!\left[\|P_{\mathcal W}\Xi\|_2^2\right].
  \label{eq:app-bound-step5}
\end{equation}
Since \(P_{\mathcal W}\) is the orthogonal projection onto a \(d_{\mathcal W}\)-dimensional subspace and \(\Xi\sim\mathcal N(0,I_d)\),
\[
  \mathbb E[\|P_{\mathcal W}\Xi\|_2^2] = d_{\mathcal W}.
\]
Substituting into \eqref{eq:app-bound-step5} and then into \eqref{eq:app-bound-step3} yields
\[
  \mathcal B_{\mathcal W}(o)
  \le
  \mathbb E\!\left[
    \|P_{\mathcal W}(A_1\!-\!\mu)\|_2^2
    \,\middle|\,
    o
  \right]
  +
  \sigma^2 d_{\mathcal W},
\]
which is exactly \eqref{eq:app-warm-bound}.
\end{proof}

\paragraph{Interpretation.}
Proposition~\ref{prop:app-warm-bound} shows that, on the warm coordinates, the branching cost is controlled by only two quantities:
\emph{(i)}~the mismatch between the WarmPrior mean \(\mu\) and the target \(A_1\), and
\emph{(ii)}~the residual Gaussian noise level \(\sigma\).
Thus, on the warm coordinates, WarmPrior becomes straighter when its mean is closer to the target and when its residual noise is smaller.
This explains the ordering of our variants.
For Preview, the training construction makes the warm mean target-aligned, so the mismatch term vanishes and only the \(\sigma^2 d_{\mathcal W}\) term remains.
For Past, the mean is only an approximation to the current target chunk, so an additional residual mismatch term remains.
The vanilla Gaussian baseline corresponds to a source mean that is far less aligned with the target, and therefore incurs a much larger ambiguity term.

\section{Training Details}
\label{app:training}

\cref{tab:training-hp} lists all training hyperparameters used in this work. Robomimic and MimicGen experiments share the same Diffusion Policy (ChiTransformer)~\cite{chi2023diffusionpolicy} backbone, which combines a Transformer~\cite{vaswani2017transformer} trunk with a ResNet-18~\cite{he2016resnet} image encoder, GroupNorm~\cite{wu2018groupnorm} normalization, and AdamW~\cite{loshchilov2019adamw} optimization; the two settings differ only in batch size and iteration count. For the real-robot experiments we fine-tune GR00T~N1.5-3B~\cite{nvidia2025gr00tn15,nvidia2025gr00tn1}, whose vision tower uses SigLIP-So400m~\cite{zhai2023siglip}, language backbone uses Qwen3-1.7B~\cite{qwen3} embedded in the Eagle~2.5-VL stack~\cite{eagle2025}, and action head uses a DiT~\cite{peebles2023dit} module; we keep the LLM and vision tower frozen and update only the action-head projector and the DiT module.

\begin{table}[p]
\centering
\caption{\textbf{Training hyperparameters across all experiments.} Robomimic and MimicGen use the ChiTransformer flow-matching backbone; real-robot experiments fine-tune GR00T~N1.5-3B with the LLM and vision tower frozen. ``---'' marks rows that do not apply to a given setup.}
\label{tab:training-hp}
\setlength{\tabcolsep}{3pt}
\resizebox{\linewidth}{!}{%
\begin{tabular}{@{}lccc@{}}
\toprule
& \textbf{Robomimic} & \textbf{MimicGen} & \textbf{GR00T~N1.5} \\
& (state / image) & (image) & (real Franka) \\
\midrule
\multicolumn{4}{l}{\emph{Architecture}} \\
\quad Backbone                      & ChiTransformer       & ChiTransformer       & GR00T~N1.5-3B \\
\quad Embedding dim                 & 384                  & 384                  & --- \\
\quad Transformer layers            & 8                    & 8                    & --- \\
\quad Attention heads               & 6                    & 6                    & --- \\
\quad Timestep emb. dim             & 128                  & 128                  & --- \\
\quad Attention dropout             & 0.1                  & 0.1                  & --- \\
\quad Image encoder                 & ResNet-18 (ImageNet) & ResNet-18 (ImageNet) & SigLIP-So400m (frozen) \\
\quad LLM                           & ---                  & ---                  & Qwen3-1.7B (frozen) \\
\quad VLM backbone                  & ---                  & ---                  & Eagle~2.5-VL \\
\quad Image input size              & $84{\times}84$       & $84{\times}84$       & $224{\times}224$ \\
\quad RGB cameras (image)           & \makecell{2 (1 TPV + 1 wrist) \\ (Transport: 2$\times$(1 TPV + 1 wrist))} & 2 (1 TPV + 1 wrist) & 3 (2 TPV + 1 wrist) \\
\quad Image augmentation            & \makecell{$76{\times}76$ random crop, \\ GroupNorm} & \makecell{$76{\times}76$ random crop, \\ GroupNorm} & \makecell{$0.95$-scale random crop, \\ resize to $224{\times}224$, \\ color jitter} \\
\quad State/action normalization    & per-key min--max     & per-key min--max     & per-key min--max \\
\quad Tuned components              & all                  & all                  & action-head projector + DiT \\
\midrule
\multicolumn{4}{l}{\emph{Optimization}} \\
\quad Optimizer                     & AdamW                & AdamW                & AdamW \\
\quad Learning rate                 & $1{\times}10^{-4}$   & $1{\times}10^{-4}$   & $1{\times}10^{-4}$ \\
\quad Weight decay                  & $1{\times}10^{-5}$   & $1{\times}10^{-5}$   & $1{\times}10^{-5}$ \\
\quad Adam $(\beta_1, \beta_2)$     & $(0.9,\,0.999)$      & $(0.9,\,0.999)$      & $(0.95,\,0.999)$ \\
\quad Adam $\epsilon$               & $1{\times}10^{-8}$   & $1{\times}10^{-8}$   & $1{\times}10^{-8}$ \\
\quad LR schedule                   & warmup + cosine      & warmup + cosine      & warmup + cosine \\
\quad Warmup ratio                  & 0.20                 & 0.20                 & 0.05 \\
\quad Gradient accumulation         & 1                    & 1                    & 1 \\
\quad Gradient checkpointing        & no                   & no                   & no \\
\quad Mixed precision               & FP32                 & FP32                 & bf16 + tf32 \\
\quad EMA rate                      & 0.995                & 0.995                & --- \\
\quad Batch size                    & 1024 / 256           & 128                  & 32 \\
\quad Iterations                    & 200{,}000            & 50{,}000             & 20{,}000 \\
\quad Training seeds                & 3                    & 3                    & 3 \\
\midrule
\multicolumn{4}{l}{\emph{Policy and data}} \\
\quad Action-chunk length $H$       & 8                    & 8                    & 16 \\
\quad Action dim                    & 7--14 (per task)     & 7--14 (per task)     & 7 \\
\quad Observation steps             & 2                    & 2                    & 1 \\
\quad State dim                     & 9--53 (per task)     & 9--53 (per task)     & 7 \\
\quad Demonstrations per task       & 250 (PH) / 300 (MH)  & 10                   & 30 \\
\quad Interpolant                   & linear               & linear               & linear \\
\quad Loss                          & flow matching        & flow matching        & flow matching \\
\quad WP-Past noise scale $\sigma$  & 0.5                  & 0.5                  & 0.5 \\
\quad WP-Preview noise scale $\sigma$ & 1.0                & 1.0                  & 1.0 \\
\midrule
\multicolumn{4}{l}{\emph{Evaluation}} \\
\quad Inference NFE                 & $\{1, 3, 9\}$        & $\{1, 3, 9\}$        & 4 \\
\quad Episodes per (task, seed)     & 200                  & 200                  & 50 \\
\quad Top-$K$ checkpoint averaging  & $K=3$                & $K=3$                & $K=1$ \\
\quad Parallel envs                 & 20                   & 20                   & --- (real) \\
\bottomrule
\end{tabular}%
}
\end{table}

\section{\texorpdfstring{$\sigma$}{sigma} Ablation}
\label{app:sigma-ablation}

The bound in \cref{eq:warmprior-branching-bound} predicts a non-monotone dependence on the prior std $\sigma$: too large and the irreducible $\sigma^2 d_{\mathcal W}$ term dominates, making the field bend to absorb a wide source; too small and the source concentrates onto the imperfect prior mean $\mu$ with no slack to absorb the persistence residual (WP-Past) or the forecast error (WP-Preview).
We empirically validate this trade-off on the most multimodal Robomimic task, \textsc{Square-MH}, by sweeping $\sigma \in \{1.5,\,1.0,\,0.5,\,0.3,\,0.1,\,0.05,\,0\}$ and evaluating each configuration with three seeds at $\text{NFE}=1$ and $H=8$.
\cref{fig:sigma-ablation} reports the resulting success rate and seed standard deviation.

\begin{figure}[t]
  \centering
  \begin{subfigure}{0.495\textwidth}
    \centering
    \includegraphics[width=0.95\linewidth]{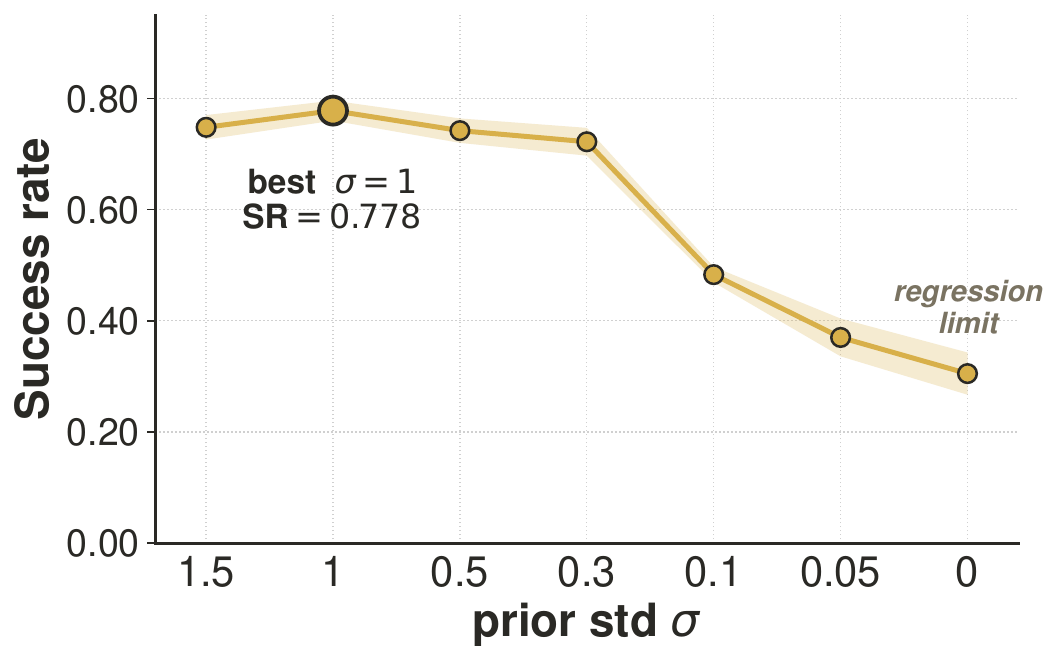}
    \caption{WP-Preview: peak at $\sigma=1.0$.}
    \label{fig:sigma-preview}
  \end{subfigure}\hspace{2pt}%
  \begin{subfigure}{0.495\textwidth}
    \centering
    \includegraphics[width=0.95\linewidth]{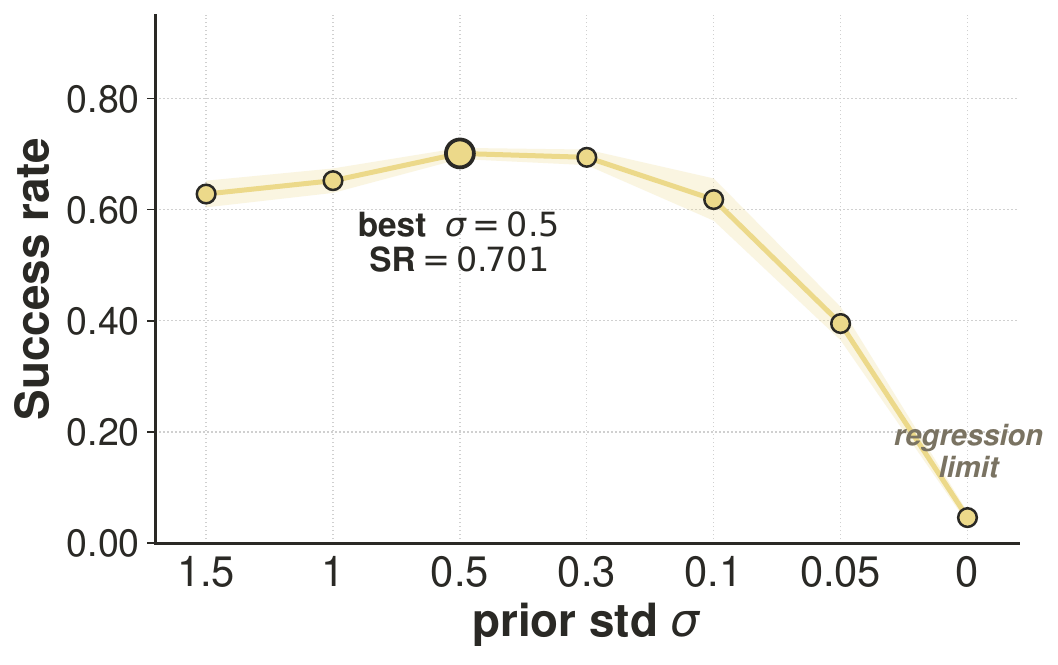}
    \caption{WP-Past: peak at $\sigma=0.5$.}
    \label{fig:sigma-past}
  \end{subfigure}
  \caption{\textbf{$\sigma$ ablation on Square-MH} ($\text{NFE}=1$, $H=8$, three seeds). Shaded band is $\pm 1$ seed std. The right end ($\sigma=0$) is the regression limit. The persistence prior of WP-Past carries more residual error than the WP-Preview forecast, so it benefits from a tighter source ($\sigma=0.5$ vs $\sigma=1.0$).}
  \label{fig:sigma-ablation}
\end{figure}

\paragraph{Findings.}
WP-Preview peaks at $\sigma=1.0$ with $\mathrm{SR}=0.778$, while WP-Past peaks at the smaller $\sigma=0.5$ with $\mathrm{SR}=0.701$.
This ordering is consistent with the role of $\mu$ in \cref{eq:warmprior-branching-bound}: Past carries the persistence residual $R$ in its mean-mismatch term, so concentrating the source onto $\mu$ via small $\sigma$ is more costly for Past than for Preview, and Past's optimum is pushed toward a smaller $\sigma$ where the $\sigma^2 d_{\mathcal W}$ penalty is reduced enough to compensate. Preview's smaller forecast error $E$ leaves the mean-mismatch term less sensitive to concentration, so its optimum sits where coverage of the multimodal target dominates the trade-off ($\sigma=1.0$).
Both curves exhibit a broad plateau followed by a sharp collapse: WP-Preview stays within $0.06$ of its optimum across $\sigma\in[0.3,1.5]$, and WP-Past stays within $0.08$ of its optimum across $\sigma\in[0.3,1.0]$, after which performance falls steeply for $\sigma\le 0.1$.
The plateau makes the choice of $\sigma$ forgiving in the moderate-noise regime.

\paragraph{Fixed $\sigma$ across tasks.}
Based on this ablation we fix $\sigma=1.0$ for WP-Preview and $\sigma=0.5$ for WP-Past for \emph{all} Robomimic, MimicGen, and real-robot experiments reported in the main paper.
\emph{We did not tune $\sigma$ per task.}
The plateau in \cref{fig:sigma-ablation} indicates that the method is robust to the choice of $\sigma$ in the moderate-noise regime, and the consistent gains obtained with these fixed values across eight benchmark tasks and a real-robot suite in \cref{tab:main-sr} confirm that a single setting transfers cleanly across embodiments and task difficulties without per-task tuning.

\paragraph{The $\sigma\!\to\!0$ limit.}
The right end of both curves ($\sigma=0$) corresponds to a deterministic source $a_0=\mu$, at which point the policy reduces to a regression-style mapping $\mu\mapsto a_1$ rather than a stochastic generative sampler.
This is the regime explored by A2A~\cite{jia2026a2a}, which encodes the action history into a deterministic latent source and composes a deterministic ODE on top.
Such a deterministic prior accelerates training convergence because the source is no longer randomized, but it also collapses the conditional $p(a\mid o)$ to a single mode, giving up the multimodal coverage that motivates generative imitation in the first place.
\cref{fig:sigma-ablation} makes this concrete: the $\sigma=0$ end-point drops to $\mathrm{SR}\approx 0.31$ for Preview and to $\mathrm{SR}\approx 0.05$ for Past, an essentially complete failure.
The Past collapse is the more severe of the two because its prior mean is the previously executed chunk: without injected noise the policy is asked to map the past chunk directly to the next chunk through a network that never saw such a deterministic source--target pairing during training.
The full WarmPrior with $\sigma>0$ retains the generative structure while still exploiting the temporally grounded prior, and our ablation shows that this stochastic regime is where the success rate is maximized.

\section{Comparing WarmPrior with Real-Time Chunking}
\label{app:rtc-compare}

WarmPrior and Real-Time Chunking (RTC)~\cite{black2025rtc} both exploit the fact that the previously executed action chunk carries a great deal of information about the next one, yet they intervene at different points in the policy stack. RTC is an inference-time procedure: at each new decision step the policy regenerates the next chunk while clamping its early positions to the actions still being executed, so the freshly generated chunk is forced to commit to the same mode as the one it overlaps with. WarmPrior is a training-time prior shaping mechanism (\cref{sec:method}): it anchors the source distribution on the previous chunk and trains the velocity field under that anchored coupling, so the learned flow itself is shorter and straighter without any inference-time inpainting. Because both mechanisms read from the same ``past chunk'' signal, it is natural to ask whether they are merely two encodings of the same gain. This section disentangles the two.

\paragraph{Setup.}
This experiment uses a different backbone from \cref{sec:main-real}: we evaluate on the $\pi_{0.5}$ vision-language-action model~\cite{pi05}, whose flow-matching action head is the natural backbone to test alongside RTC. The remainder of the real-robot pipeline matches \cref{sec:main-real}: a Franka Research 3 with teleoperated demonstrations collected on the DROID platform~\cite{khazatsky2024droid}, three training seeds, and 20 evaluation trials per seed. We deliberately pick two tasks where RTC is known to help, namely the dynamic and precision-sensitive \emph{Block Throwing} and \emph{Towel Folding} (\cref{fig:rtc-compare-task}), and ask whether WarmPrior also gains in this regime.

We compare four configurations of the same flow-matching backbone:
\textbf{Base} (vanilla $\mathcal N(0,I)$ prior, independent per-chunk inference),
\textbf{RTC} (the Base policy executed under the real-time chunking inference procedure),
\textbf{WarmPrior} (WP-Preview with the temporally grounded prior, independent per-chunk inference), and
\textbf{RTC+WarmPrior} (the WP-Preview policy executed under the same RTC procedure).
The combined configuration is the natural ``stack'' of the two interventions: WP-Preview reshapes $p_0$ at training time, and RTC additionally clamps the executing portion of the trajectory at inference time.

\begin{figure}[t]
  \centering
  \begin{minipage}[b]{0.45\linewidth}
    \centering
    \includegraphics[width=\linewidth]{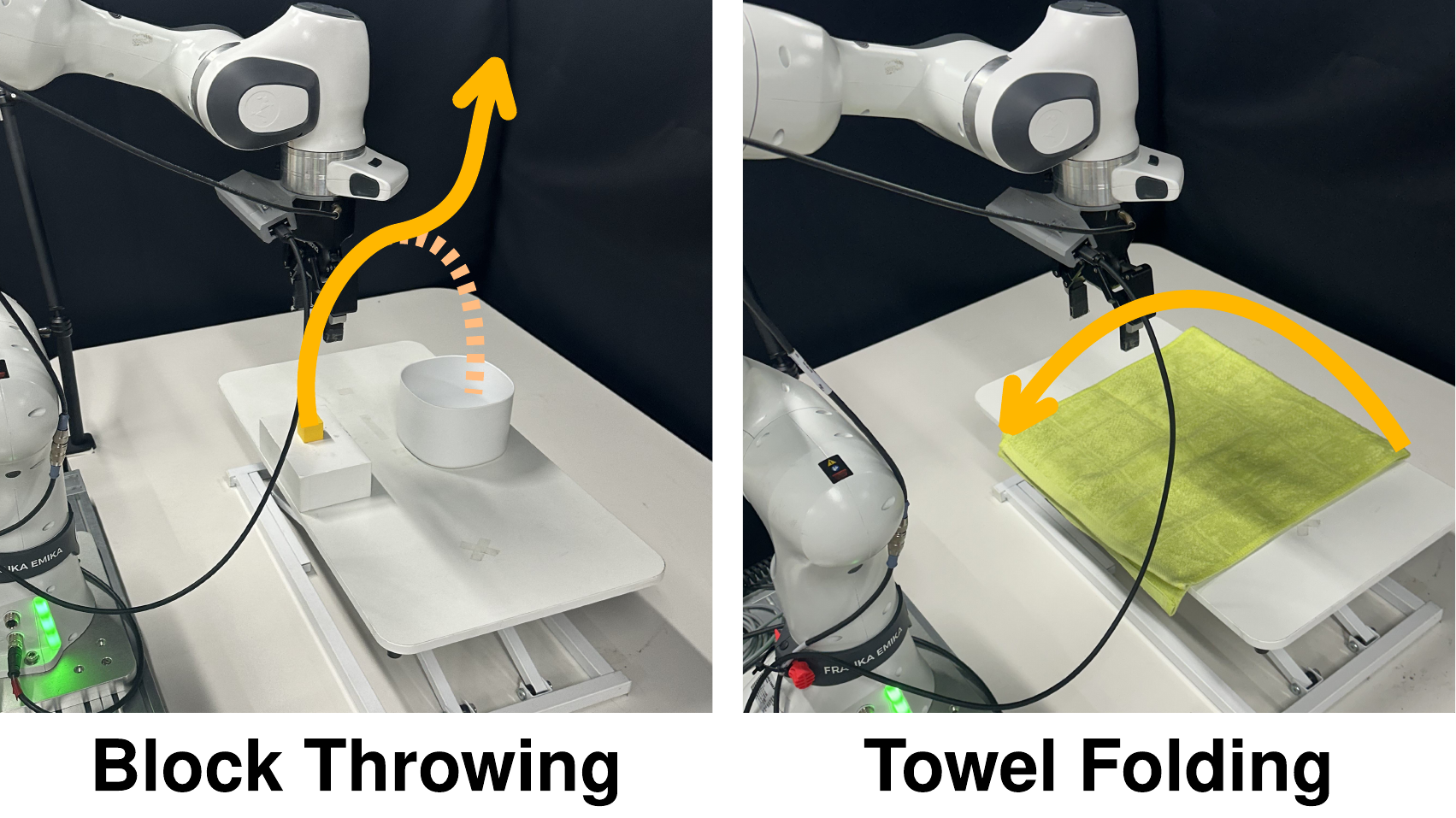}
    \caption{\textbf{RTC comparison tasks.} The two highly dynamic real-robot scenes used in \cref{fig:rtc-compare-bar}: \emph{Block Throwing} and \emph{Towel Folding}. Both involve fast, committed whole-arm motions where mode-switching across chunk boundaries is particularly visible.}
    \label{fig:rtc-compare-task}
  \end{minipage}\hspace{1.0em}%
  \begin{minipage}[b]{0.51\linewidth}
    \centering
    \includegraphics[width=\linewidth]{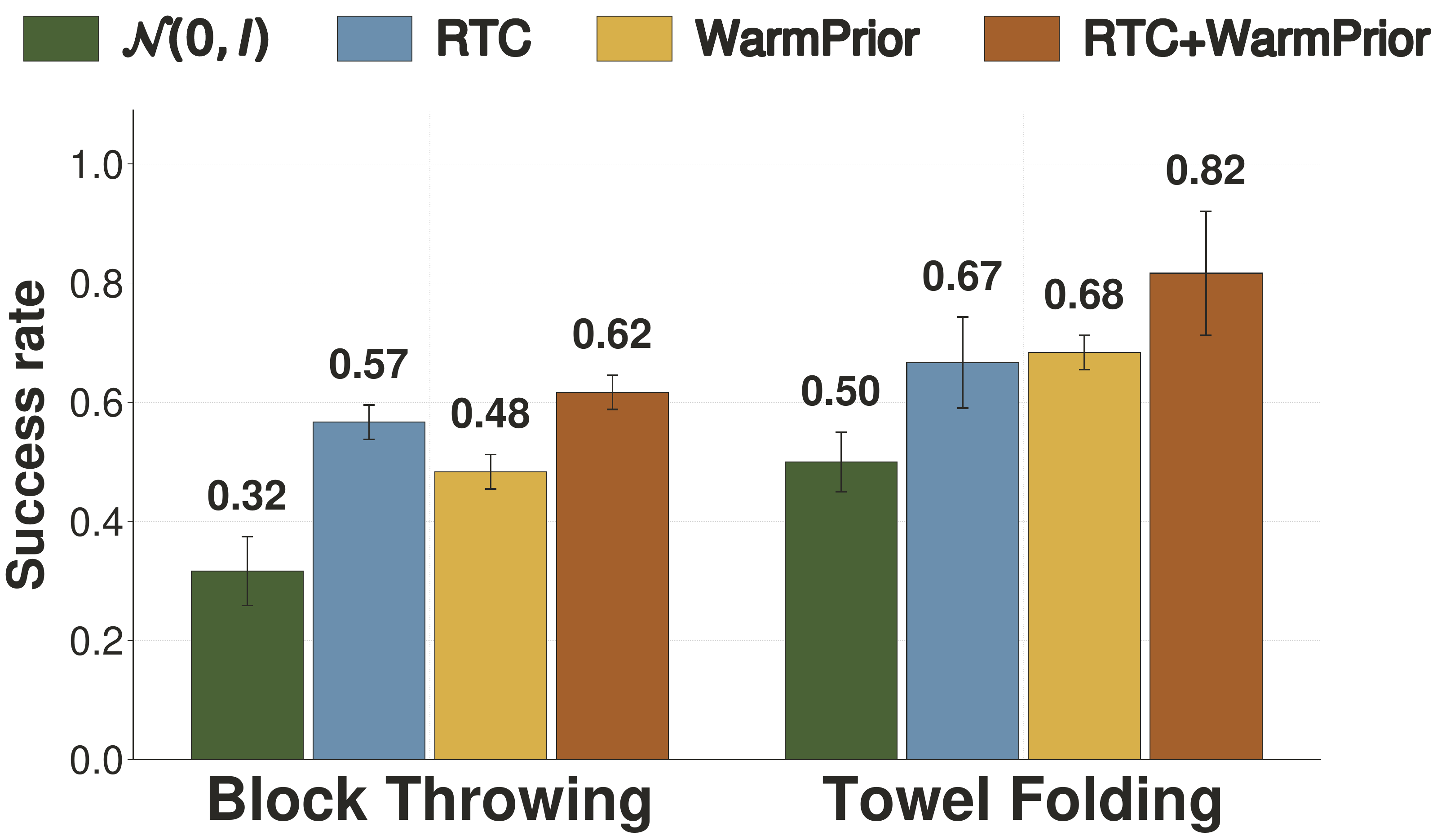}
    \caption{\textbf{RTC vs.\ WarmPrior on highly dynamic tasks.} Real-robot success rate of $\pi_{0.5}$ (mean and seed standard deviation over three training seeds, 20 trials per seed). RTC and WarmPrior each improve over the baseline, and the combination exceeds both, suggesting their gains come from distinct mechanisms.}
    \label{fig:rtc-compare-bar}
  \end{minipage}
\end{figure}

\paragraph{Findings.}
\cref{fig:rtc-compare-bar} reports per-task success rate, and three observations follow.

\emph{(i) RTC is effective on highly dynamic tasks.} RTC nearly doubles the baseline on \emph{Block Throwing} ($0.32 \!\to\! 0.57$) and lifts \emph{Towel Folding} from $0.50$ to $0.67$. This is consistent with the picture in which mode-switching across chunk boundaries is most damaging when the underlying motion is fast and committed, exactly the regime where RTC's explicit inpainting suppresses inter-chunk discontinuities.

\emph{(ii) WarmPrior also provides consistent gains.} WarmPrior alone improves both tasks ($0.32 \!\to\! 0.48$ on \emph{Block Throwing}, $0.50 \!\to\! 0.68$ on \emph{Towel Folding}), and on \emph{Towel Folding} its gain is comparable to that of RTC ($0.68$ vs $0.67$). This is the picture predicted by \cref{sec:straight}: the prior mean drawn from the previous chunk reduces endpoint ambiguity for the velocity field on \emph{both} tasks, with the bound of \cref{eq:warmprior-branching-bound} tightened by exactly the same temporally grounded signal that RTC also exploits.

\emph{(iii) Combining the two yields an additional improvement.} \emph{RTC+WarmPrior} reaches $0.62$ on \emph{Block Throwing} and $0.82$ on \emph{Towel Folding}, exceeding both individual methods on both tasks; the increment is largest on \emph{Towel Folding}, where the combination ($0.82$) sits well above either RTC alone ($0.67$) or WarmPrior alone ($0.68$). If the two methods relied on the same underlying effect, stacking them would saturate and produce no further gain. The fact that they compound is evidence that they reach their success rate via distinct mechanisms.
We summarize the picture as follows.
\textbf{RTC} enforces \emph{explicit mode commitment} at inference time: by clamping the early portion of the flow to the chunk currently being executed, it guarantees zero discontinuity at the boundary, which is what stabilizes fast, committed motions across chunk transitions.
\textbf{WarmPrior} reshapes the training-time coupling so that the learned velocity field is itself straighter, in the OT-aligned sense quantified by \cref{tab:curvature} and bounded in \cref{eq:warmprior-branching-bound}.
The two interventions address different failure modes of standard flow matching, namely curved learned flows (training side) and inter-chunk discontinuities (inference side), so combining them removes both at once.

\paragraph{Practical implications.}
A practical consequence: RTC requires chunks long enough to leave a meaningful overlap window, and it commits the policy to the chunk currently being executed before re-planning, which is awkward on tasks that demand fast within-chunk reactivity (\cref{sec:ac1}). WarmPrior's $\sigma$ knob (\cref{eq:warmprior-branching-bound}, \cref{sec:ac1}) instead supplies a continuous trade-off between \emph{temporal commitment} and \emph{multimodal expressiveness} that remains operative even at $H = 1$, where action chunking is effectively disabled. WarmPrior should therefore be read as a complement to RTC when chunking is available, and as a viable alternative when it is not.

% NeurIPS 2026 requires the paper checklist for full submissions.
% Place the checklist file at neurips2026/checklist.tex (the default template
% provides one) and uncomment the line below before submitting.
% \newpage
% \input{checklist.tex}

\end{document}